\documentclass{article}

\PassOptionsToPackage{numbers, compress}{natbib}

\usepackage[preprint]{neurips_2023}




\usepackage{booktabs}       
\usepackage{amsfonts}       
\usepackage{amsmath,amssymb}
\usepackage{mathtools}


\usepackage{amsmath,amsfonts,bm}









\def\eqref#1{equation~\ref{#1}}









\def\1{\bm{1}}










\DeclareMathAlphabet{\mathsfit}{\encodingdefault}{\sfdefault}{m}{sl}
\SetMathAlphabet{\mathsfit}{bold}{\encodingdefault}{\sfdefault}{bx}{n}













\usepackage{hyperref}
\usepackage{url}
\usepackage{graphics}
\usepackage{graphicx}
\usepackage{subfigure}
\usepackage{caption}
\usepackage{wrapfig}
\usepackage{amsthm}
\usepackage{multirow}
\usepackage{multicol}
\usepackage[ruled,vlined]{algorithm2e}
\usepackage{bbding}
\newtheorem{theorem}{Theorem}
\newtheorem{lemma}[theorem]{Lemma}

\newtheorem{proposition}{Proposition}

\newcommand{\xx}{\boldsymbol{x}}

\newcommand{\diff}{\mathrm{d}}
\newtheorem{definition}[theorem]{Definition}
\newtheorem{remark}{Remark}
\newtheorem*{theorem*}{Theorem}
\newtheorem*{proposition*}{Proposition}

\title{Training Energy-Based Models with Diffusion Contrastive Divergences}

%

\author{%
    Weijian Luo$^{1}$\thanks{This work was done when he was a research intern at Huawei Noah's Ark Lab. Email: luoweijian@stu.pku.edu.cn.},~~Hao Jiang$^{3}$\thanks{This work was done when he was a research intern at Huawei Noah's Ark Lab},~~Tianyang Hu$^{2}$,~~Jiacheng Sun$^{2}$\thanks{Corresponding to: Jiacheng Sun (sunjiacheng1@huawei.com)},~~Zhenguo Li$^{2}$,~~Zhihua Zhang$^{1}$\\[1em]
    $^1$Peking University, $^2$Huawei Noah's Ark Lab,  $^3$Harbin Institute of Technology (Shenzhen)
}

\begin{document}

\maketitle

\begin{abstract}
Energy-Based Models (EBMs) have been widely used for generative modeling. 
Contrastive Divergence (CD), a prevailing training objective for EBMs, requires sampling from the EBM with Markov Chain Monte Carlo methods (MCMCs), which leads to an irreconcilable trade-off between the computational burden and the validity of the CD. Running MCMCs till convergence is computationally intensive. 
On the other hand, short-run MCMC brings in an extra non-negligible parameter gradient term that is difficult to handle. In this paper, we provide a general interpretation of CD, viewing it as a special instance of our proposed Diffusion Contrastive Divergence (DCD) family.
By replacing the Langevin dynamic used in CD with other EBM-parameter-free diffusion processes, we propose a more efficient divergence. We show that the proposed DCDs are both more computationally efficient than the CD and are not limited to a non-negligible gradient term. We conduct intensive experiments, including both synthesis data modeling and high-dimensional image denoising and generation, to show the advantages of the proposed DCDs. On the synthetic data learning and image denoising experiments, our proposed DCD outperforms CD by a large margin. In image generation experiments, the proposed DCD is capable of training an energy-based model for generating the Celab-A $32\times 32$ dataset, which is comparable to existing EBMs.
\end{abstract}

\section{Introduction}
Energy-Based Models (EBMs) are an important part of unsupervised learning \citep{lecun2006tutorial, Hinton06, Zhu2004FiltersRF}. Paired with the superb expressive power of deep neural networks, EBMs draw great attention in the machine learning community and have broad applications in many unsupervised learning tasks such as generative modeling \citep{xie2016theory,gao2020learning,nijkamp2019learning, Zhao2021LearningEG,du2019implicit,grathwohl2019your}, out-of-distribution detection \citep{zhai2016deep, liu2020energy, Lee2020AdversarialTO}, concept learning \citep{mordatch2018concept, Du2020CompositionalVG} and others \citep{Haarnoja2017ReinforcementLW, Xie2017SynthesizingDP, Xie2018LearningDN, Ingraham2019LearningPS}. 
Despite the popularity, the training of EBMs is challenging and remains an active field of research. 
One dominant line of training methods of EBMs relies on sampling from the EBMs by running MCMC chains \citep{song2021train,hinton2002training,Hinton06,du2019implicit,du2020improved,gao2020learning,grathwohl2019your}, whose convergence can be computationally expensive in practice. 
To improve efficiency, \cite{hinton2002training} proposed the Contrastive Divergence (CD), which was calculated via \emph{short-run} MCMC chains that are initialized from data samples. 
An overview of CD can be seen in Figure \ref{fig:overall}(a), where the data distribution is transported with EBM-induced MCMCs as the upper line of the figure illustrates.
The CD was further developed in many works \citep{tieleman2009using,xie2022tale,du2019implicit,nijkamp2019learning,miyato2018spectral,du2019implicit,grathwohl2019your} and has become a general approach for training EBMs. 

Nonetheless, the CD has its own drawbacks that are deeply rooted in the employed MCMC mechanism. To be more specific, samples from MCMCs are induced by EBMs, so these samples depend on EBMs' parameters, leading to a non-negligible gradient term that is difficult to handle as we introduced in Section \ref{sec:background}. Some works overlooked the parameter dependence for simplicity \citep{hinton2002training,liu2017learning}. As pointed out by \citet{du2020improved}, such an omission leads to training failures, e.g., non-convergence of training objectives. 
To address the parameter-dependence issue, \citet{du2020improved} proposed to consider the non-negligible gradient term through an additional non-parametric entropy estimation component. However, the non-parametric entropy estimation is neither efficient nor scalable for high-dimensional data. 

\begin{figure*}
\centering
\subfigure[Contrastive Divergence]{\includegraphics[width=0.45\textwidth
    ]{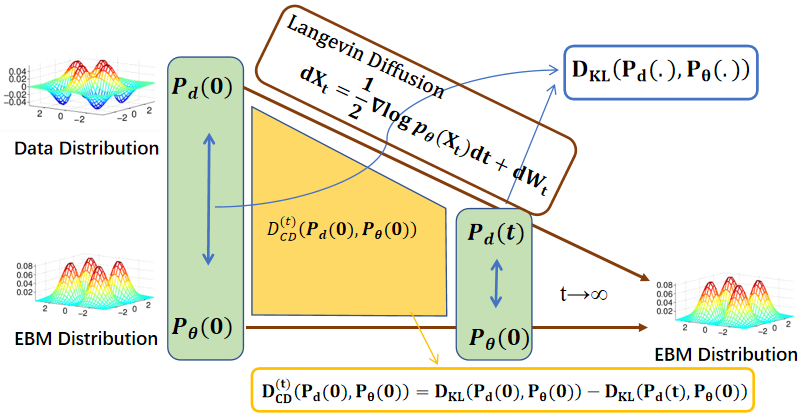}}
\hspace{5mm}
\subfigure[Diffusion Contrastive Divergence]{\includegraphics[width=0.45\textwidth
    ]{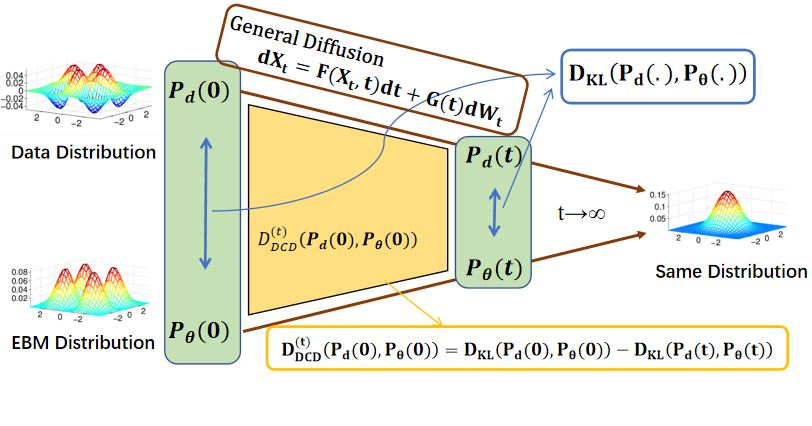}}
\caption{Illustration of DCD and CD. The yellow area represents the corresponding divergence. The CD takes the EBM-induced Langevin dynamics to transport data and EBM distribution to meet with the same EBM distribution. The DCD considers a more general diffusion process to transport both data and EBM distribution to meet with the same distribution.}
\label{fig:overall}
\end{figure*}

In this work, we address the parameter-dependence issue of CD by extending the Langevin diffusion, a commonly used MCMC for CD, to general diffusion processes and propose a novel family of divergences --- the \emph{diffusion contrastive divergence} (DCD) family, as illustrated in Figure \ref{fig:overall}(b). 
Our proposed DCD family is both theoretically sound and computationally efficient. The contributions of our proposed DCD is three folded. First, the DCD overcomes the non-negligible gradient issue of CD that influence the accuracy of CD. Second, the DCD does not depend on EBM-induced MCMC so is efficient when implemented. Third, the proposed DCD framework provides a unified view that includes the CD as a special instance. The framework can potentially benefit further understanding and developing algorithms for training EBMs. To demonstrate the effectiveness and efficiency of the proposed DCD, we instantiate the DCD with a special VE diffusion process and call it the DCD-VE (or just DCD for short) algorithm. We conduct experiments with DCD-VE in three experiments including synthetic data modeling, image denoising, and image generation. 
On the synthetic data learning and high-dimensional image denoising experiments, the proposed DCD-VE outperforms CD with a significant margin. On the image generation experiment, we train a time-dependent energy-based model on the CelebA dataset of a resolution of $32\times 32$. The trained EBM is comparable to previous EBMs on generation.
Besides, the experiments demonstrate that the DCD is more efficient than CD, being 2-4 times faster in terms of the wall-clock time.

\section{Background}\label{sec:background}
\paragraph{Energy-based models.}
Let $p_d$ represent the data distribution. An energy-based model specifies the density with a neural-parametrized energy function $f_\theta(\xx)$ with the form
\begin{align}\label{equ:ebm_define}
    p_\theta(\xx) = \frac{\exp(f_\theta(\xx))}{Z_\theta},
\end{align}
where $f_\theta$ is usually a deep neural network and $Z_\theta = \int \exp(f_\theta(\bm{u}))d\bm{u}$ is the unknown normalizing constant. To make the derivation neat, we slightly abuse the conventions and call $f_\theta(\xx)$ the energy function. In most cases, $Z_\theta$ is so complicated that is intractable, making the likelihood intractable as well. Previous works find out that the difficulty of estimating the normalizing constant can be circumvented with consistent sampling from the EBM when training with Maximum Likelihood Estimation (MLE). More precisely, the derivative of EBM's expected likelihood over data distribution has an expression
\begin{align}
    \frac{\partial}{\partial \theta}\mathbb{E}_{p_d}\log  \frac{\exp(f_\theta(\xx))}{Z_\theta}  = \mathbb{E}_{ p_d}\frac{\partial}{\partial \theta} f_\theta(\xx) - \mathbb{E}_{p_\theta}\frac{\partial}{\partial \theta}f_\theta(\xx).
\end{align}
This expression shows that the likelihood function's parameter gradient can be estimated with samples consistently drawn from data and the EBM. The Langevin dynamics (LD), is a usual choice of MCMC for obtaining samples from EBMs. It simulates the diffusion process
\begin{align}\label{equ:ld}
    \diff \xx_t = \frac{1}{2}\nabla_{\xx_t}\log p_\theta(\xx_t) \diff t + \diff \bm{w}_t,
\end{align}
in order to draw samples from the EBM. Under mild conditions \citep{pavliotis2014stochastic}, the marginal distribution of \eqref{equ:ld} will converge to the target distribution regardless of the initial distribution. Here $\bm{w}_t$ is an independent Wiener process. Notice that the normalizing constant $Z_\theta$ in Equation \eqref{equ:ebm_define} is independent of $\xx$, so we have 
\begin{align*}
    \nabla_{\xx_t} \log p_\theta(\xx_t) \coloneqq \nabla_{\xx_t} \big[ f_\theta(\xx_t) + Z_\theta\big] = \nabla_{\xx_t} \log f_\theta(\xx_t).
\end{align*}
This shows that the LD can take EBMs' neural network without the influence of the unknown normalizing constant. 
\paragraph{Contrastive divergence and the non-negligible gradient term.}
Training EBMs with MLE requires MCMC chains to run sufficiently long so as to draw samples from the EBM. Some works studied the possibility of training EBMs with un-converged MCMCs. \citet{hinton2002training} and \citet{Hinton06} observed that a few MCMC steps which are initialized from data samples work well empirically so they argued the MCMC chains do not need to fully converge when training EBMs. They thus formally proposed the Contrastive Divergence as
\begin{align}\label{eqn:cd_0}
    \mathcal{D}_{CD}(p_d,p_\theta) = \mathcal{D}_{KL}(p_d, p_\theta) - \mathcal{D}_{KL}(p_{d,\theta}^{(T)}, p_\theta),
\end{align}
where $p_{d,\theta}^{(T)}$ stands for the marginal distribution of a short-run MCMC initialized from $p_d$ with transition time $T$ and the notation $\mathcal{D}_{KL}$ denotes the Kullback–Leibler (KL) divergence. For such a definition, the non-negativity $\mathcal{D}_{CD}(p,q)\geq 0$ holds and $\mathcal{D}_{CD}(p_d, p_\theta)=0$ only when $p_t=q_t$ almost everywhere. This makes $\mathcal{D}_{CD}$ a reasonable divergence, we put detailed derivation on the non-negativity of CD in the Appendix. If we take the parameter derivative, we have
\begin{align}\label{eq:cd_grad}
    &\frac{\partial}{\partial \theta}\mathcal{D}_{CD}(p_d,p_\theta) = \mathbb{E}_{ p_{d,\theta}^{(T)}}\big[\frac{\partial}{\partial \theta}f_\theta(\xx) \big] -\mathbb{E}_{ p_d}\big[\frac{\partial}{\partial \theta} f_\theta(\xx)\big] - \mathbb{E}_{p_{d,\theta}^{(T)}}\bigg[\log p_\theta(\xx) \frac{\partial}{\partial \theta}\log p_{d,\theta}^{(T)}(\xx) \bigg].
\end{align}
The third gradient term is difficult to handle because, for EBM, we do not know the value of normalizing constant $Z_\theta$. So the gradient $\frac{\partial}{\partial\theta} \log p_{d,\theta}^{(T)}(\xx)$ is also unknown. \citet{hinton2002training} and \citet{liu2017learning} proposed to omit the third gradient term and simplify the CD \eqref{eq:cd1} as
\begin{equation}\label{eq:cd1}
    \mathbb{E}_{\xx_T\sim \operatorname{sg}[p_{d,\theta}^{(T)}]}f_\theta(\xx_T) - \mathbb{E}_{\xx\sim p_d} f_\theta(\xx).
\end{equation}
Here the notation $\xx_T\sim \operatorname{sg}[p_{d,\theta}^{(T)}]$ represents the sample $x_T$ is drawn from $p_{d,\theta}^{(T)}$ but omitting the parameter dependence of $\theta$.
In practice, there is always a non-negligible term for the gradient of the contrastive divergence. \citet{du2020improved} tried to address the non-negligible term by introducing an additional non-parametric entropy estimation component together with the training of EBM, viewing the non-negligible third term of \eqref{eq:cd_grad} as a parameter derivative of Shannon entropy that is estimated non-parametrically. Although technically sound, the entropy estimation which \citet{du2020improved} brought in is computationally intensive and not scalable in high dimensions. 

\paragraph{Diffusion process.}
A diffusion process is a stochastic process driven by a stochastic differential equation (SDE) \citep{Srkk2019AppliedSD} with a drift vector $\bm{F}$ and a diffusion matrix $\bm{G}$,
\begin{equation}\label{form:1}
    \diff \xx_t = \bm{F}(\xx_t,t)\diff t + \bm{G}(t)\diff \bm{w}_t,
\end{equation}
where $\bm{w}_t$ is a standard Wiener process. For simplicity, we assume $\bm{G}$ to be a scalar function of time $t$ in the rest of the paper. If a diffusion process is initialized with an initial distribution $p_0$, then the evolution of marginal probability density is governed by the Fokker-Planck equation \citep{Risken1984FokkerPlanckE}:
\begin{align}\label{equ:fp_equation}
  \frac{\diff}{\diff t}p(\xx,t) &= - \langle \nabla_{\xx}, p(\xx,t)\bm{F}(\xx,t)\rangle + \frac{1}{2}\bm{G}^2(t)\Delta_{\xx} p(\xx,t), \ p(\xx,0) = p_0(\xx).
\end{align}
The Langevin dynamics defined in \eqref{equ:ld} is an instance of diffusion processes. The VE diffusion is a commonly used diffusion process in generative modeling \citep{song2020score,song2021maximum,karras2022edm}. It writes 
\begin{align}
    \label{eq:ve_forward}
    & \diff \xx_t = g(t)\diff \bm{w}_t.
\end{align}
The diffusion has explicit conditional distributions $p_t(\xx_t|\xx_0)$ and their marginal samples are cheap to obtain as we put in the Appendix. 

\section{Diffusion contrastive divergences}
Our goal is to propose novel training methods that overcome both the non-negligible gradient term and the inefficiency issue caused by MCMC of CD, by generalizing the definition of CD to other parameter-free diffusion processes, named diffusion contrastive divergence (DCD). In this section, we first give the formal definition of DCD. Then we establish the connections of DCD to existing methods, namely the diffusion recovery likelihood and the KL-contraction divergence. Later we proposed a practical algorithm, the DCD-VE based on the VE diffusion \eqref{eq:ve_forward} for training energy-based models. 

\subsection{CD with general diffusions}
We follow the notations defined in Section \ref{sec:background} and take the LD as the MCMC which defines the CD. Recall the definition of CD \eqref{eq:cd1}.

One of the most important reasons for taking LD to define the divergence is that the KL divergence of the marginal distributions with LD is strictly decreasing and converges to $0$ when $T\to\infty$ unless $p_\theta = p_d$ (We put in Appendix). This makes the CD a well-defined divergence. But the definition of LD \eqref{equ:ld} incorporates the EBM and its parameters $\theta$, giving rise to a hard-to-handle non-negligible gradient term as we pointed out in \ref{sec:background}. Besides, obtaining samples with LD also relies on the sequential simulation of SDE which is computationally inefficient. So it would be ideal if the Langevin dynamics that the CD uses are replaced with some parameter-free alternatives.

Fortunately, other diffusion processes, such as the VE process \eqref{eq:ve_forward} with the properly defined function $g(t)$ also guarantee the strict decrease and the convergence of the KL between marginal distributions as LD does. Besides, the definition of such diffusion processes does not contain any EBM parameters, and the marginal samples are efficient to obtain as we put in discussions in Appendix. 

Based on such an observation, we formally define the \emph{Diffusion Contrastive Divergence} (DCD), as the KL difference between an initial distribution and the transitional distribution under some pre-defined diffusion process.
\begin{definition}[Diffusion Contrastive Divergence]
    \begin{align}\label{def:dcd}
    \mathcal{D}_{DCD}^{(\bm{F},\bm{G},T)}(p_d, p_\theta) := \mathcal{D}_{KL}(p_d, p_\theta) - \mathcal{D}_{KL}(p_d^{(T)}, p_\theta^{(T)}).
\end{align}
Here $p_d^{(T)}$ and $p_\theta^{(T)}$ stand for the marginal distributions of the diffusion \eqref{form:1} that are initialized with $p_d$ and $p_\theta$ respectively.
\end{definition}
To further study the properties of the proposed DCDs, we first give a theorem to verify that the DCD is a well-defined probability divergence. 
\begin{theorem}\label{thm:1}
Let $\bm{F}(\xx,t)$ and $\bm{G}(t)$ be two pre-defined functions. For two distributions $p$ and $q$, assume both $p,q$ evolve according to the same diffusion process \eqref{form:1}.
Let $p^{(t)}$ and $q^{(t)}$ denote the time $t$ marginal distribution under SDE evolution. Then we have 
\begin{align*}
    &\mathcal{D}_{DCD}^{(\bm{F},\bm{G},T)}(p,q) = \frac{1}{2}\int_{0}^T \mathbb{E}_{\xx_t \sim p^{(\bm{F},\bm{G},t)}(x)} \bm{G}^2(t)\|\nabla_{\xx_t} \log p^{(\bm{F},\bm{G},t)}(\xx_t) - \nabla_{\xx_t} \log q^{(\bm{F},\bm{G},t)}(\xx_t)\|^2_2 \diff t.
\end{align*}
\end{theorem}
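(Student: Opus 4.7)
The plan is to prove this identity via a de~Bruijn-type time-derivative computation: introduce $\phi(t) := \mathcal{D}_{KL}(p^{(t)}, q^{(t)})$, show by direct calculation that
\[
\frac{\diff}{\diff t}\phi(t) = -\frac{1}{2}\bm{G}^2(t)\,\mathbb{E}_{\xx_t\sim p^{(t)}}\bigl\|\nabla_{\xx_t}\log p^{(t)}(\xx_t) - \nabla_{\xx_t}\log q^{(t)}(\xx_t)\bigr\|_2^2,
\]
and then integrate this equality from $0$ to $T$. Since $\mathcal{D}_{DCD}^{(\bm{F},\bm{G},T)}(p,q) = \phi(0) - \phi(T) = -\int_0^T \phi'(t)\,\diff t$ by definition, this immediately gives the claimed formula.

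The main step is the computation of $\phi'(t)$. Writing $\phi(t) = \int p^{(t)}\log p^{(t)}\,\diff\xx - \int p^{(t)}\log q^{(t)}\,\diff\xx$, differentiating under the integral, and using that $\int\partial_t p^{(t)}\,\diff\xx = 0$, I would arrive at
\[
\phi'(t) = \int \log\tfrac{p^{(t)}}{q^{(t)}}\,\partial_t p^{(t)}\,\diff\xx - \int \tfrac{p^{(t)}}{q^{(t)}}\,\partial_t q^{(t)}\,\diff\xx.
\]
I then substitute the Fokker--Planck equation \eqref{equ:fp_equation} for both $\partial_t p^{(t)}$ and $\partial_t q^{(t)}$, which splits $\phi'(t)$ into a drift contribution (terms involving $\bm{F}$) and a diffusion contribution (terms involving $\bm{G}^2$). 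Repeated integration by parts, together with the identities $\nabla p = p\,\nabla\log p$ and $\nabla q = q\,\nabla\log q$, shows that the drift terms from the $p$-equation and the $q$-equation are both equal to $\int p\,\bm{F}\cdot(\nabla\log p - \nabla\log q)\,\diff\xx$ and hence cancel. The diffusion terms simplify, after expanding the square $\|\nabla\log p - \nabla\log q\|^2 = \|\nabla\log p\|^2 - 2\,\nabla\log p\cdot\nabla\log q + \|\nabla\log q\|^2$, to exactly $-\tfrac{1}{2}\bm{G}^2(t)\,\mathbb{E}_{p^{(t)}}\|\nabla\log p^{(t)} - \nabla\log q^{(t)}\|_2^2$.

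The main obstacle is ensuring the integration-by-parts steps are rigorous: one needs sufficient decay of $p^{(t)}$, $q^{(t)}$ and their gradients at infinity for the boundary terms to vanish, as well as enough regularity to interchange $\partial_t$ and the spatial integral. I would state these as standard regularity assumptions (the paper's assumption that both $p,q$ evolve under the same SDE and that $\log p^{(t)}/q^{(t)}$ is well-defined suffices in typical settings), and invoke \cite{pavliotis2014stochastic} or similar references. Apart from this technicality, the proof is a clean application of the Fokker--Planck equation and the product rule; the drift cancellation is the satisfying algebraic step that makes the result hold for \emph{any} $\bm{F}$, which is precisely what justifies using diffusions other than Langevin in the subsequent definition of DCD.
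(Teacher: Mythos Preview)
Your proposal is correct and follows essentially the same route as the paper: the paper also differentiates $\mathcal{D}_{KL}(p_t,q_t)$ in $t$, substitutes the Fokker--Planck equation for $\partial_t p_t$ and $\partial_t q_t$, uses Stein's identity (i.e.\ integration by parts) to simplify each piece, observes that the drift contributions cancel, and then integrates the resulting identity over $[0,T]$. The only cosmetic difference is that the paper splits your term $\int \log\tfrac{p}{q}\,\partial_t p$ into $\int (\partial_t p)\log p - \int (\partial_t p)\log q$ and treats these separately via two preparatory lemmas, whereas you keep them combined; the computations are otherwise identical.
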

We give detailed proof in the Appendix. From Proposition \ref{thm:non_negativity}, we see that DCD is non-negative.
\begin{proposition}\label{thm:non_negativity}
For any two distributions $p$ and $q$, any function $\bm{F},\bm{G}$ and any diffusion time $T$, then 
\[
\mathcal{D}_{DCD}^{(\bm{F},\bm{G},T)}(p,q) \geq 0.
\]
\end{proposition}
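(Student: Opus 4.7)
The plan is to derive Proposition \ref{thm:non_negativity} as a direct corollary of Theorem \ref{thm:1}. Since the theorem has already rewritten $\mathcal{D}_{DCD}^{(\bm{F},\bm{G},T)}(p,q)$ as an integral over $[0,T]$ of an expectation of $\bm{G}^2(t)\|\nabla_{\xx_t}\log p^{(\bm{F},\bm{G},t)}(\xx_t)-\nabla_{\xx_t}\log q^{(\bm{F},\bm{G},t)}(\xx_t)\|_2^2$, the integrand is manifestly non-negative: $\bm{G}^2(t)\geq 0$ as the square of a scalar, the squared Euclidean norm is non-negative, and the expectation of a non-negative random variable is non-negative. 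Integrating a non-negative function over $[0,T]$ yields a non-negative value, which is precisely the claim.

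Concretely, I would begin the proof by invoking Theorem \ref{thm:1} to express DCD in the Fisher-divergence-type integral form. Then I would state the three elementary non-negativity observations above, conclude that the integrand is $\geq 0$ for almost every $t\in[0,T]$, and hence the integral is $\geq 0$. No further analytic machinery is required.

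As an alternative (or sanity-check) route, I would note that Proposition \ref{thm:non_negativity} can also be obtained without Theorem \ref{thm:1} by appealing to the data processing inequality for the KL divergence: the map sending an initial law $\mu_0$ to the time-$T$ marginal under the common SDE \eqref{form:1} is a Markov kernel applied to $p$ and $q$ alike, so $\mathcal{D}_{KL}(p_d^{(T)},p_\theta^{(T)})\leq \mathcal{D}_{KL}(p_d,p_\theta)$, which is exactly DCD $\geq 0$. This alternative is worth mentioning because it does not rely on smoothness or integrability assumptions needed to justify the exchange of differentiation and integration in the Fokker--Planck derivation behind Theorem \ref{thm:1}.

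There is essentially no obstacle: the heavy lifting has already been done in Theorem \ref{thm:1}, and what remains is a one-line non-negativity observation. The only place where one might need care is if $\bm{G}$ were allowed to be complex-valued or matrix-valued in a more general setup, but the excerpt has explicitly restricted $\bm{G}$ to a scalar function of $t$, so $\bm{G}^2(t)\geq 0$ holds without qualification.
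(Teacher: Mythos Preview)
Your primary approach is correct and matches the paper's: Proposition~\ref{thm:non_negativity} is treated as an immediate corollary of Theorem~\ref{thm:1}, since the integral representation expresses $\mathcal{D}_{DCD}^{(\bm{F},\bm{G},T)}(p,q)$ as a time-integral of a manifestly non-negative integrand. Your alternative route via the data processing inequality for KL is a nice addition that the paper does not mention; it gives the same conclusion under weaker regularity assumptions, though the paper relies solely on the Fisher-divergence form from Theorem~\ref{thm:1}.
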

With a suitable choice of $\bm{F}$ and $\bm{G}$, the KL divergence between marginal distributions is strictly decreasing, thus the defined $\mathcal{D}^{(\bm{F}, \bm{G}, T)}$ does not degenerate, making $\mathcal{D}^{(\bm{F}, \bm{G}, T)}(p,q) = 0$ if and only if $p=q, a.e.$. As we show in Appendix, the VE diffusion satisfies this property. 

For a diffusion process that does not depends on EBM's parameter, the corresponding DCD avoids the parameter-dependence issues. Figure \ref{fig:overall}(b) gives the concept of DCD. Both the data and EBM's distribution evolve along the diffusion process specified by $(\bm{F},\bm{G})$ as in \eqref{form:1}. With suitable choices, when $T\to \infty$, two involved distributions coincide with the same stationary distribution. The yellow region accounts for what DCD measures. 

\begin{remark}
    Notice that $p_\theta$ itself is a stationary distribution of the above LD as we put in Appendix. Hence, $p_{\theta}^{(t)} = p_\theta$ holds for any $t\in [0,T]$. So if we choose a special $\bm{F}(\xx,t) = \nabla_{\xx} f_\theta(\xx)/2$ and $\bm{G}(t)=\mathbf{I}$, the proposed $\mathcal{D}_{DCD}^{(\bm{F},\bm{G},T)}$ recovers CD (\eqref{eq:cd1}).
\end{remark}

\begin{table*}[h]
\caption{Comparison of DCD and CD.}
\label{tab:1}
\vskip 0.05in
\begin{center}
\begin{scriptsize}
\begin{tabular}{lccccc}
\toprule
Method & MCMC & Diffusion Process  & One-step DCD Formula\\
\midrule
CD &  $\checkmark$ & $\diff\xx_t = -\nabla \frac{f_\theta(\xx_t)}{2}\diff t + \diff \bm{w}_t$  & Stationary\\
DCD-VE & \XSolidBrush  & $\diff\xx_t = g(t)\diff \bm{w}_t$ &Eq.(\ref{eqn:dcd_ve})\\
\bottomrule
\end{tabular}
\end{scriptsize}
\end{center}
\vskip -0.1in
\end{table*}
To be more concrete, we consider VE diffusion as a demonstration. Recall the definition of VE diffusion \ref{eq:ve_forward}.
The conditional distribution of the VE diffusion does not depend on EBM's parameter $\theta$. The marginal samples can be drawn with $\bm{x}_0 \sim p_d, \xx_t\sim p_t(\xx_t|\xx_0)$. 
\begin{theorem}
Minimizing the DCD is equivalent to minimizing the following divergence.
    \begin{align}\label{equ:dcd_tractable}
    \mathcal{L}_{DCD}(\theta) =& \mathbb{E}_{\xx_0\sim p_d, \xx_t\sim p(\xx_t|\xx_0)}\big[f_\theta^{(\bm{F},\bm{G},T)}(\xx_t)\big] - \mathbb{E}_{\xx_0\sim p_d}\big[f_\theta(\xx_0) \big].
\end{align}
Here $f_\theta^{(\bm{F},\bm{G},T)}$ are time $T$ marginal energy under diffusion process (\eqref{form:1}). 
\end{theorem}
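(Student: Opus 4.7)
The plan is to show that $\mathcal{D}_{DCD}^{(\bm{F},\bm{G},T)}(p_d, p_\theta)$ and $\mathcal{L}_{DCD}(\theta)$ agree up to a $\theta$-independent additive constant, so that they have identical minimizers and identical parameter gradients. First, I would expand each KL as entropy minus cross-entropy,
\[
\mathcal{D}_{DCD}^{(\bm{F},\bm{G},T)}(p_d, p_\theta) = \bigl[-H(p_d) - \mathbb{E}_{p_d}\log p_\theta\bigr] - \bigl[-H(p_d^{(T)}) - \mathbb{E}_{p_d^{(T)}}\log p_\theta^{(T)}\bigr].
\]
Because the forward diffusion $(\bm{F},\bm{G})$ is parameter-free by construction, neither $H(p_d)$ nor $H(p_d^{(T)})$ depends on $\theta$, so both can be peeled off and lumped into a constant, leaving two cross-entropy terms to simplify.

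Next I would introduce the EBM parametrization at both endpoints. At $t=0$ we have $\log p_\theta(\xx_0) = f_\theta(\xx_0) - \log Z_\theta$. At $t=T$, writing the marginal through the transition kernel $p(\xx_T\mid\xx_0)$ of the diffusion,
\[
p_\theta^{(T)}(\xx_T) = \int p(\xx_T\mid\xx_0)\, p_\theta(\xx_0)\, \diff\xx_0 = \frac{1}{Z_\theta}\int p(\xx_T\mid\xx_0)\, e^{f_\theta(\xx_0)}\, \diff\xx_0,
\]
so the natural time-$T$ marginal energy is $f_\theta^{(\bm{F},\bm{G},T)}(\xx_T) := \log\!\int p(\xx_T\mid\xx_0)\, e^{f_\theta(\xx_0)}\,\diff\xx_0$, which gives $p_\theta^{(T)}(\xx_T) = \exp\!\bigl(f_\theta^{(T)}(\xx_T)\bigr)/Z_\theta$. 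Integrating this identity against $\diff\xx_T$ and using that $p_\theta^{(T)}$ has unit mass yields the key identity $Z_\theta^{(T)} := \int \exp\!\bigl(f_\theta^{(T)}(\xx_T)\bigr)\diff\xx_T = Z_\theta$: the normalizing constants at the two endpoints coincide.

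With this in hand, substituting both parametrizations into the two cross-entropy terms, the $+\log Z_\theta$ coming from $-\mathbb{E}_{p_d}\log p_\theta$ and the $-\log Z_\theta^{(T)}$ coming from $\mathbb{E}_{p_d^{(T)}}\log p_\theta^{(T)}$ cancel exactly, leaving
\[
\mathcal{D}_{DCD}^{(\bm{F},\bm{G},T)}(p_d, p_\theta) = \mathbb{E}_{p_d^{(T)}}\bigl[f_\theta^{(T)}\bigr] - \mathbb{E}_{p_d}\bigl[f_\theta\bigr] + \bigl[H(p_d^{(T)}) - H(p_d)\bigr].
\]
Rewriting $\mathbb{E}_{p_d^{(T)}}[f_\theta^{(T)}(\xx_T)] = \mathbb{E}_{\xx_0\sim p_d,\,\xx_T\sim p(\xx_T\mid\xx_0)}[f_\theta^{(T)}(\xx_T)]$ by the tower property identifies the first two terms with $\mathcal{L}_{DCD}(\theta)$, while the bracketed entropy difference is $\theta$-independent; this completes the equivalence.

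The main obstacle, and really the only non-bookkeeping step, is pinning down the identity $Z_\theta^{(T)} = Z_\theta$. It hinges on choosing the marginal energy $f_\theta^{(T)}$ canonically as the logarithm of the convolution of $e^{f_\theta}$ against the transition kernel, and then invoking conservation of probability under the parameter-free diffusion so that the two log-partition terms cancel. Once that cancellation is secured, everything else is an application of the tower property and of the fact that $H(p_d)$ and $H(p_d^{(T)})$ carry no $\theta$-dependence.
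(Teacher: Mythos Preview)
Your proposal is correct and follows the same high-level outline as the paper: expand the two KL terms, strip off the $\theta$-independent pieces $H(p_d)$ and $H(p_d^{(T)})$ (equivalently $\log p_d$ and $\log p_d^{(T)}$), cancel the two log-partition contributions via $Z_\theta^{(T)}=Z_\theta$, and then rewrite $\mathbb{E}_{p_d^{(T)}}[\cdot]$ through the transition kernel.

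The one substantive difference is how the key identity $Z_\theta^{(T)}=Z_\theta$ is obtained. The paper establishes it indirectly by a Fokker--Planck computation: it differentiates $Z_\theta^{(t)}$ in $t$, expands via the energy-evolution PDE, and uses Stein's identity twice to show $\frac{\diff}{\diff t}Z_\theta^{(t)}=0$. Your argument is more elementary and self-contained: you define $f_\theta^{(T)}(\xx_T)=\log\int p(\xx_T\mid\xx_0)e^{f_\theta(\xx_0)}\diff\xx_0$ explicitly, observe $p_\theta^{(T)}=e^{f_\theta^{(T)}}/Z_\theta$, and read off $Z_\theta^{(T)}=Z_\theta$ directly from $\int p_\theta^{(T)}=1$. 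This buys you a one-line proof of the cancellation without any PDE machinery; the paper's route, while heavier, has the side benefit of characterizing the full time-evolution of the energy (needed later for the one-step DCD approximation), which your argument does not supply.
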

Check the Appendix for detailed proof. The term $\log p_d(\xx)$ and $\log p_d^{(\bm{F},\bm{G},T)}$ are independent of parameter $\theta$ since the diffusion process is parameter-free. The equation \eqref{equ:dcd_tractable} defines a tractable objective that is equivalent to DCD.  

The advantages of the DCD with VE diffusion over CD are two-fold. 
First, recall that the CD is hindered by the parameter-dependence of both the transitional distribution $\xx_T\sim p_{d,\theta}^{(T)}$ and the $T$-time evolved data distribution $\log p_{d,\theta}^{(T)}(\xx_T)$ in the MCMC chains. These two terms are parameter-free if we choose a parameter-free diffusion instead of Langevin dynamics. 

Second, the sampling from VE diffusion gets significantly cheaper when taking specially designed diffusions such as VE diffusion. However as a trade-off, one needs to evaluate the time $T$ marginal energy of $f_\theta^{(\bm{F},\bm{G},T)}(\xx_t)$, which can be easier to handle. We provide further analysis of the energy evolution in Section \ref{sec:energy_evolve}.
To summarize, DCD is an MCMC-free method that overcomes the CD's two difficulties with one easier problem of estimating the energy evolution. Such MCMC-free training methods for EBMs are a hot research area in EBM community\citep{grathwohl2021no}. We give a brief summary of the differences between CD and the DCD that are defined through the VE diffusion in Table \ref{tab:1}.

\subsection{Connections to existing methods}
The DCD framework not only provides a new understanding of CD but also more insights into existing works on training EBMs. For instance, DCD has inner connections to two existing methods, the Diffusion Recovery Likelihood \citep{gao2020learning,bengio2013generalized} and the KL-Contraction Divergence \citep{lyu2011unifying}.

\paragraph{Connection to Diffusion Recovery Likelihood.} 
Let $p^{(\sigma)}(\Tilde{\xx}|\xx) = \mathcal{N}(\Tilde{\xx};\xx,\sigma^2\mathbf{I})$ denotes a Gaussian perturbation on $x$. The recovery likelihood of a data $\xx$ is defined as the conditional probability to recover $\xx$ from noise perturbed observation $\tilde{\xx}$, i.e., $p_\theta(\xx|\Tilde{\xx})= p^{(\sigma)}(\Tilde{\xx}|\xx)p_\theta(\xx)/p_\theta(\Tilde{\xx})$, which is proportional to $\exp(f_\theta(\xx) -\frac{1}{2\sigma^2}\|\Tilde{\xx} - \xx\|_2^2)$.

\citet{gao2020learning} viewed recovery likelihood as a new EBM for $\xx$ if $\tilde{\xx}$ is given as fixed and minimized the recovery likelihood through a CD-like MCMC method for which negative samples are consistently sampled from $p_\theta(\xx|\Tilde{\xx})$-induced MCMC. \citet{gao2020learning} also extended the recovery likelihood to multi-level Gaussian noise level $\{\sigma_i\}$ to define a diffusion recovery likelihood. Surprisingly as we show in this section, the recovery likelihood objective is a special case of DCD when taking the diffusion process to be the VE diffusion. Revisit that the definition of the recovery likelihood writes
\[
\mathbb{E}_{\xx\sim p_d,\Tilde{\xx}\sim p_d^{(\sigma)}(\Tilde{\xx})}\log p_\theta(\xx|\Tilde{\xx}),
\]
the $p^{(\sigma)}(\Tilde{\xx}|\xx)$ and $p_d(\xx)$ are independent of parameter $\theta$, so maximizing the recovery likelihood is equivalent to minimizing
\begin{align*}
\mathcal{D}_{KL}(p_d(\xx),p_\theta(\xx)) - \mathcal{D}_{KL}(p_d^{(\sigma)}(\Tilde{\xx}),p_\theta^{(\sigma)}(\Tilde{\xx})).
\end{align*}
We put the detailed derivation in the Appendix. Here $p_\theta^{(\sigma)}(\Tilde{\xx}) = \int p_\theta(\xx)p(\Tilde{\xx}|\xx)d\xx$ is the marginal density of Gaussian perturbed distribution. The recovery likelihood and its diffusion counterpart are special cases of DCD when taking the diffusion process to be VE diffusion \eqref{eq:ve_forward}. When setting $\sigma_i^2 = \int_0^{t_i}g(s)ds$, the DCD-VE recovers the diffusion recovery likelihood. However, the implementation of maximizing recovery likelihood in \citep{gao2020learning} is different. They sample from $\log p_\theta(\xx|\Tilde{\xx})$ through MCMC when training, making the training procedure computationally expensive. In our definition of the DCD, we do not require sampling from recovery likelihood. We instead use contrastive mechanics between $p$ and $p^{(T)}$ to cancel out the normalizing constant as we introduced in  later sections. Besides, the DCD framework can be generalized to other diffusion processes of which the definition does not involve EBM's parameters. 

\paragraph{DCD as a KL-contraction divergence.} \citep{lyu2011unifying} proposed the so-called KL contraction divergence framework. They pointed out that if an operator $\Phi(p)$ satisfies the KL contraction property, meaning $$\mathcal{D}_{KL}(\Phi(p),\Phi(q))\leq \mathcal{D}_{KL}(p,q),$$ a KL-contraction divergence can be defined as $\mathcal{D}_{KL}(p,q) - \mathcal{D}_{KL}(\Phi(p),\Phi(q))$. As we mentioned in the Theorem \ref{thm:1}, the marginalization along any diffusion process is a KL contraction operator, so the DCD can be viewed also as a KL-contraction divergence. However, in our paper, we define the DCD through the motivation of generalizing the CD. Besides, we propose a concrete divergence, the DCD-VE, which is much different from the instances that have been studied in \citet{lyu2011unifying}.

\subsection{Evolution of the energy function}\label{sec:energy_evolve}
Since the definition of DCD \eqref{def:dcd} involves the computation of the diffused density function $p_{d,\theta}^{(T)}(\xx)$ and corresponding energy function $f_\theta^{(T)}(\xx)$, so in this section, we characterize the evolution of the energy function $f_\theta^{(T)}(\xx)$ through a partial differential equation. Denote $p_\theta^{(0)}(\xx) = e^{f_\theta(\xx)}/Z_\theta$ where $Z_\theta$ is the normalizing constant. We show that the evolution of the energy function under the diffusion process (\eqref{form:1}) follows a PDE.

\begin{proposition}
Assume $p_\theta^{(0)}(\xx) = e^{f_\theta(\xx)}/Z_\theta$ where $Z_\theta$ is a parameter-dependent normalizing constant. Assume $p_\theta^{(t)}$ denotes the evolved density along a diffusion process \eqref{form:1}, then for any fixed $\xx$, the energy value $p_\theta^{(t)}(\xx)$ evolves according to a PDE
\[
\diff \log p_\theta^{(t)}(\xx)/\diff t=\mathcal{O}(\nabla_{\xx} \log p_\theta^{(t)}),
\]
where $\mathcal{O}(\nabla_{\xx} \log p_\theta^{(t)})$ is the following operator which is independent of the normalizing constant, 
\begin{align*}
    &\langle \bm{G}^2(t)\nabla_{\xx} \log p_\theta^{(t)}(\xx)/2 - \bm{F}(\xx,t), \nabla_{\xx} \log p_\theta^{(t)}(\xx) \rangle + \langle \nabla, \bm{G}^2(t)\nabla_{\xx} \log p_\theta^{(t)}(\xx)/2 - \bm{F}(\xx,t) \rangle.
\end{align*}
\end{proposition}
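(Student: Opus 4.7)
The plan is to derive the PDE directly from the Fokker--Planck equation (\ref{equ:fp_equation}) stated earlier in the paper. First I would specialize the Fokker--Planck equation to the evolved density $p_\theta^{(t)}$, which gives
\[
\partial_t p_\theta^{(t)}(\xx) = -\langle \nabla_{\xx}, p_\theta^{(t)}(\xx)\bm{F}(\xx,t)\rangle + \tfrac{1}{2}\bm{G}^2(t)\Delta_{\xx} p_\theta^{(t)}(\xx),
\]
and then divide both sides by $p_\theta^{(t)}(\xx)$ so that the left-hand side becomes $\partial_t \log p_\theta^{(t)}(\xx)$.

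Next I would apply the elementary log-gradient identity $\nabla p = p\,\nabla \log p$ to rewrite the right-hand side purely in terms of $\log p_\theta^{(t)}$. Concretely, expanding the drift term gives
\[
\tfrac{1}{p}\langle\nabla_{\xx},pF\rangle = \langle \nabla_{\xx}, F\rangle + \langle \nabla_{\xx}\log p, F\rangle,
\]
while expanding the diffusion term using $\Delta p = \nabla\cdot(p\,\nabla\log p) = p\,\|\nabla\log p\|^2 + p\,\Delta\log p$ gives
\[
\tfrac{1}{p}\,\tfrac{1}{2}G^2\Delta p = \tfrac{1}{2}G^2\|\nabla_{\xx}\log p\|^2 + \tfrac{1}{2}G^2\,\Delta_{\xx}\log p.
\]

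Then I would regroup the four resulting terms so that the quadratic-in-$\nabla\log p$ piece is paired with the inner product against $F$, and the $\Delta\log p$ piece is paired with $\nabla\cdot F$; using that $\bm{G}(t)$ is $\xx$-independent so that $\tfrac{1}{2}G^2\Delta\log p = \langle \nabla, \tfrac{1}{2}G^2\nabla\log p\rangle$, the right-hand side collapses into exactly
\[
\langle \tfrac{1}{2}\bm{G}^2(t)\nabla\log p_\theta^{(t)} - \bm{F}, \nabla\log p_\theta^{(t)}\rangle + \langle\nabla, \tfrac{1}{2}\bm{G}^2(t)\nabla\log p_\theta^{(t)} - \bm{F}\rangle,
\]
which is the claimed operator $\mathcal{O}(\nabla_{\xx}\log p_\theta^{(t)})$.

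The independence from $Z_\theta$ then follows by inspection: the operator only involves $\nabla_{\xx}\log p_\theta^{(t)}$ and its divergence, and since $\log p_\theta^{(t)}$ differs from the (time-evolved) energy only by an additive function of $t$ alone, the spatial gradient $\nabla_{\xx}\log p_\theta^{(t)}$ does not depend on the normalizing constant. I do not anticipate a serious obstacle here --- the argument is a direct Fokker--Planck computation. The only delicate bookkeeping is keeping the sign of the drift term correct when dividing by $p$ and recognizing the recombination into the two inner products; once the identity $\nabla p = p\nabla\log p$ is applied consistently, the four terms assemble into the stated operator essentially automatically.
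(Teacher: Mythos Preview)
Your proposal is correct and follows essentially the same route as the paper: start from the Fokker--Planck equation, divide by $p_\theta^{(t)}$, apply the identities $\nabla p = p\nabla\log p$ and $\Delta p = p\|\nabla\log p\|^2 + p\,\Delta\log p$, and regroup into the two inner products. The only notable difference is that the paper's appendix goes further and proves, via Stein's identity, that the normalizing constant $Z_\theta^{(t)}$ is actually constant in $t$; this is not required by the proposition as stated (your observation that $\mathcal{O}$ depends only on the spatial gradient already suffices for the independence claim), but it is used immediately afterward in the paper to justify that $\log Z_\theta$ cancels in the DCD objective.
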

It is worth emphasizing that since the evolution operator $\mathcal{O}(.)$ does not depend on $Z_\theta$, the normalizing constant keeps unchanged in the process and thus will be exactly canceled out when we substitute the $T$-time KL and initial KL as in DCD expression. So the DCD is not bothered by a parameter-dependent normalizing constant. We give a more detailed argument in the Appendix.

In practice, we do not need many steps when training EBM. So we use a single step as an approximation when implementing DCD. Our experiments show that the single-step DCD works well in practice. Here we derive a one-step approximation of DCD for practical implementations. 

\paragraph{DCD-VE.} For VE diffusion \eqref{eq:ve_forward}, the time-change rate of energy can be approximated with 
\begin{align}\label{eqn:dcd_ve}
    \mathcal{L}_{DCD}^{(VE)}(\theta) = \mathbb{E}_{p_t}\frac{1}{2}\bm{G}^2(0)\bigg[ \|\nabla_{\xx} f_\theta(\xx_t)\|^2 + \Delta f_\theta(\xx_t) \bigg] +\frac{1}{t} \bigg[ \mathbb{E}_{p_t}[f_\theta(\xx_t)] - \mathbb{E}_{p_d}[f_\theta(\xx_0)] \bigg].
\end{align}
The detailed derivations are put in Appendix. We formally define the DCD-VE objective for training EBM as $\mathcal{L}_{DCD}^{(VE)}(\theta)$ in (\ref{eqn:dcd_ve}) with a small perturbation level $t$. For one-step $\mathcal{L}_{DCD}^{(VE)}(\theta)$, if data is low dimensional, the second order derivative is computationally tractable. However, for high dimensional data such as natural images, the second order derivative (the Laplacian term) can be efficiently estimated by the widely-used Hutchinson’s trace estimation techniques \citep{Hutchinson1989ASE,Chen2019ResidualFF,Grathwohl2019FFJORDFC,Song2019SlicedSM,song2021scorebased}.

\subsection{Train time-dependent EBM with DCD}
Inspired by recent success on score-based diffusion models \citep{gao2020learning,song2020improved,song2020score,song2021maximum}, learning a diffusion time-dependent EBM helps for better generative performance. In this section, we modify our DCD-VE for training time-dependent EBMs. Assuming $(\bm{F},\bm{G})$ denotes a pre-defined forward diffusion process \eqref{form:1} (as we use when defining DCD). Let $p_d^{(0)}$ denotes the data distribution, and $p_d^{(t)}$ denotes the $t$-time diffused data distribution initialized with $p_d^{(0)}$. A time-dependent EBM is a $f_\theta^{(t)}$ if a neural network that takes both $\xx$ and time $t$ to output the energy function of a point $\xx$ at diffusion time $t$. One can train $f_\theta^{(t)}$ to model the diffused data energy $\log p_d^{(t)}(\xx)$ at any time $t$. More precisely, at each training iteration, we randomly pick a timestamp $t\sim Unif([0, T])$, and apply DCD training at timestamp $t$ with a small diffusion perturbation $\delta$. In practice, if we discretize the time interval of a diffusion process to $\{t_i\}_{i=1,.., K}$, the perturbation $\delta$ can be chosen to be $\delta_i = t_i - t_{i-1}$ for different time $t_i$. Such a setting combines the DCD and diffusion process in a more natural way. We summarize the DCD training for time-dependent EBM in an Algorithm in the Appendix. 

\section{Experiments}
\subsection{Energy modeling of 2D distributions}
In this section, we validate our proposed DCD on 7 commonly used 2D synthetic datasets. This experiment shows that DCD is capable of learning challenging distributions such as the Checkerboard distribution whose distribution changes rapidly (as shown in the left part of Figure \ref{fig:toy_and_celeba}).

\paragraph{Experiment Setting.}
We use a 3-layer MLP with Gaussian Error Linear Unit (GELU) activations \citep{Hendrycks2016GaussianEL} and 300 hidden units for implementation of the EBM. We compare the DCD-VE with CD and Persistent Contrastive Divergence (PCD)\citep{tieleman2009using}, which is a well-known variant of CD. Since the CD training requires many iterations of inference of the EBM, we limit the times of score function evaluation to 10 times to make an equal comparison. We set the training batch size to be 1000 and PCD's replay buffer size to be 10 times the batch size. All models share the same architecture and the same training setting. We put detailed settings in Appendix.

\paragraph{Evaluation metric.} We compute the score-matching loss over the training data as the evaluation metric. The score matching loss is defined with 
\begin{align*}
    \operatorname{L}(\theta) \coloneqq \mathbb{E}_{\xx \sim p_d} \bigg[ \frac{1}{2}\|\nabla_{\xx} f_\theta(\xx)\|_2^2 + \Delta_{\xx} f_\theta(\xx) \bigg].
\end{align*}
So the smaller the SM loss is, the better the learning performance of the EBM.

\begin{table*}
\caption{Estimated SM loss of learned EBM.}
\label{tab:2}
\vskip 0.05in
\begin{center}
\begin{scriptsize}
\begin{tabular}{lccccccc}
\toprule
Dataset & Swissroll & Circles & Rings & Moons & 8 Gaussians & 2 Spirals & Checkerboard  \\
\midrule
\textbf{DCD-VE} & \textbf{-2398.81} & \textbf{-131.37}  & \textbf{-758.33} & \textbf{-200.67} & \textbf{-120.09} & \textbf{-470.92} & \textbf{-178.43} \\
CD & $+\infty$ & -130.03  & $+\infty$ & -195.72  & -117.29 & $+\infty$ & -67.22  \\
PCD & $+\infty$ & -108.54 & $+\infty$ & -193.76 & -97.59 & $+\infty$ & -124.27 \\
\bottomrule
\end{tabular}
\end{scriptsize}
\end{center}
\end{table*}

\begin{figure}
\centering
\subfigure[Comparison of CD, PCD and DCD-VE]{\includegraphics[width=0.6\linewidth]{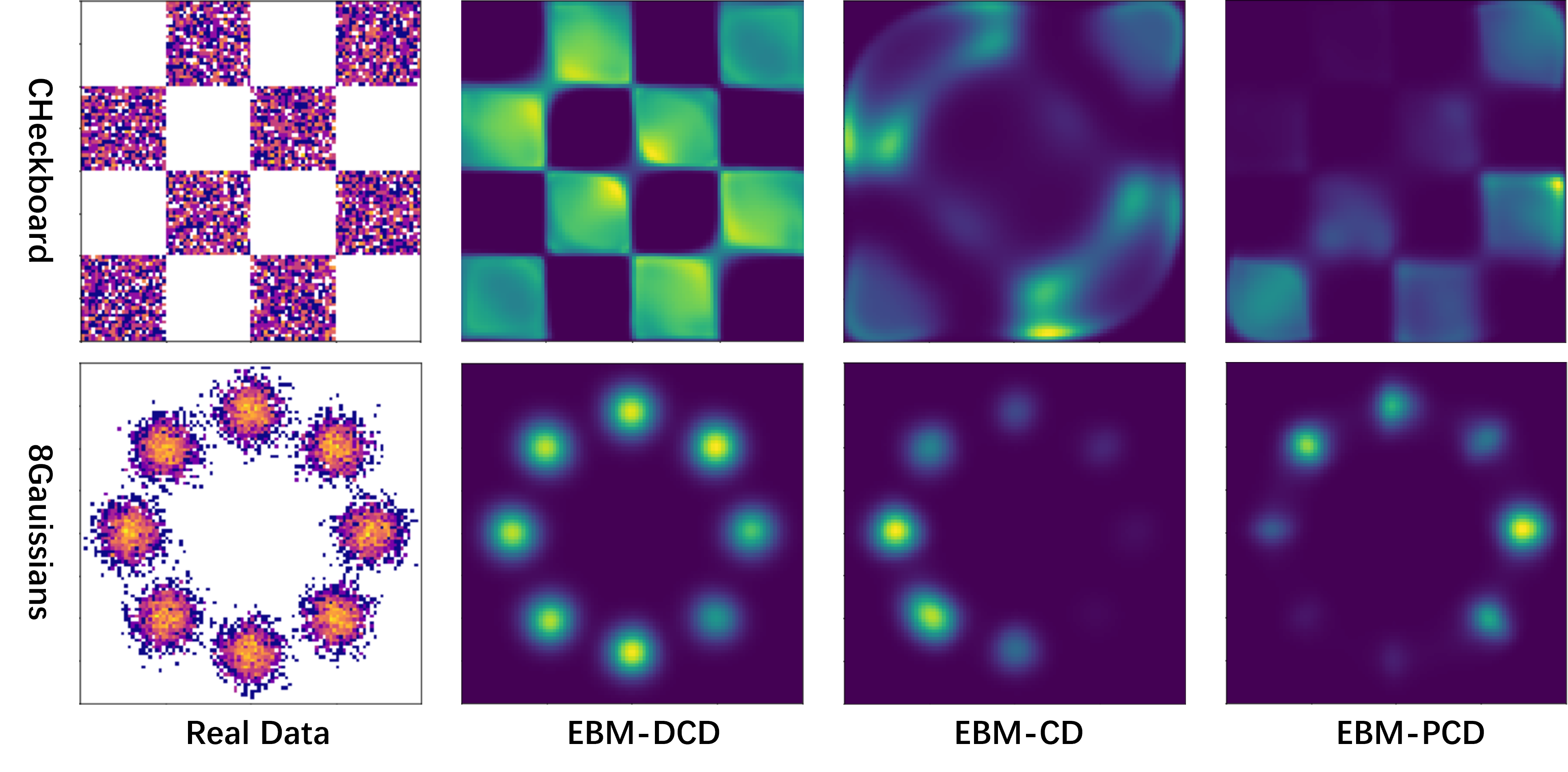}}
\centering
\subfigure[Generated CelebA $32$ samples from EBM.]{\includegraphics[width=0.3\linewidth]{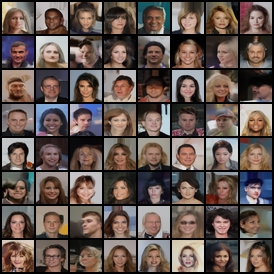}}
\caption{\textit{Left}: 2D examples when CD and PCD fails to learn a correct EBM but DCD-VE can learn successfully; \textit{Right}: Generated CelebA $32$ samples from EBM trained with DCD-VE.}
\label{fig:toy_and_celeba}
\end{figure}

\paragraph{Performance.}
We estimate the Score Matching (SM) loss (\citep{Song2019SlicedSM,Meng2020AutoregressiveSM}) on training data to evaluate the trained EBM. The smaller the SM loss, the better performance the EBM behaves. Table \ref{tab:2} shows the resulting SM losses for EBMs that are trained with DCD-VE, CD, and PCD. Since the SM loss is the training objective of SM-related training methods, we do not include them in the comparison. 
As is shown in Table \ref{tab:2}, DCD-VE outperforms CD and PCD on all datasets by a significant margin. Besides, CD and related methods do not converge on the more challenging Swiss roll, Rings, and 2Spirals dataset, while the DCD-VE can learn all data energy equally well. Figure \ref{fig:dcd_toy} demonstrates the learned energies on five datasets with DCD-VE.

\begin{wraptable}{r}{4.5cm}
\caption{CelebA.}\label{fig:celaba}
\small
\begin{tabular}{ll}
\toprule
\textbf{Models} & \textbf{FID} $\downarrow$ \\
\midrule
ABP \citep{HanLZW17} & 51.50  \\
ABP-SRI \citep{NijkampP0ZZW20} & 36.84 \\
VAE \citep{kingma2013auto} & 38.76 \\
Glow \citep{kingma2018glow} & 23.32 \\
DCGAN \citep{radford2015unsupervised} & 12.50 \\
EBM-FCE \citep{gao2020flow} & 12.21 \\
GEBM \citep{arbel2020generalized} & \; 5.21\\
CoopFlow(T=30) \citep{xie2022tale} & \; 6.44 \\
\midrule
EBM-DCD & \; 13.85 \\
\bottomrule
\end{tabular}
\end{wraptable}

\begin{figure}[h]
\centering
\includegraphics[width=0.8\linewidth]{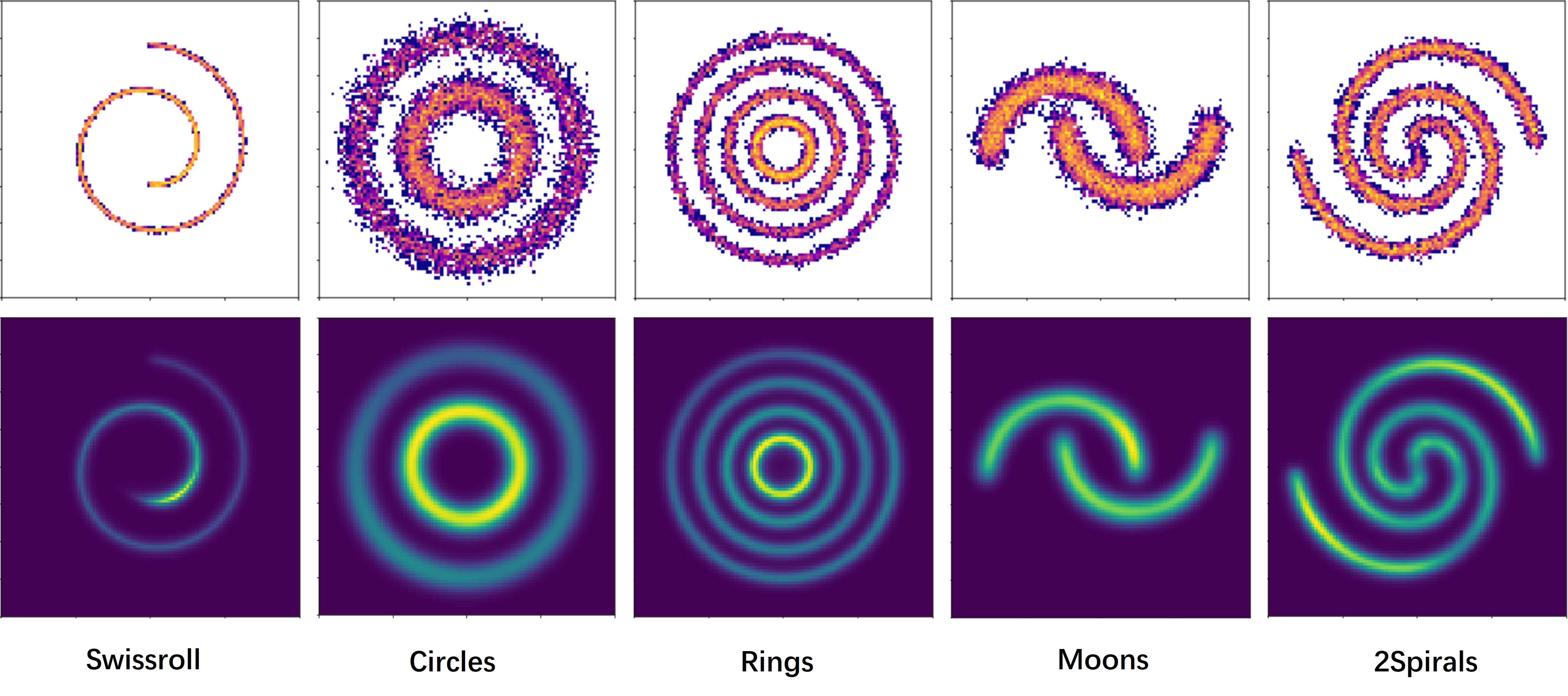}
\caption{Comparison of different training methods.}
\label{fig:dcd_toy}
\end{figure}

\begin{figure}[h]
\centering
\includegraphics[width=0.8\linewidth]{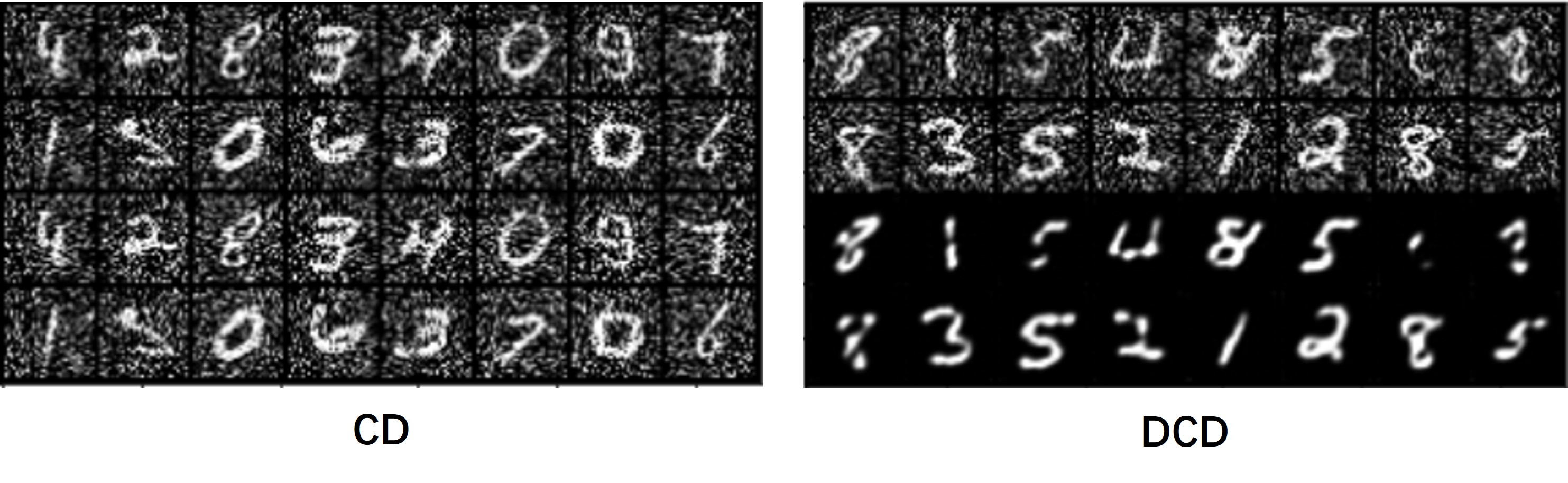}
\caption{The CD fails to denoise large added noise, while the DCD (VE) can denoise successfully.}
\label{fig:image_denoise}
\end{figure}

\subsection{Image denoising with EBM}
Image denoising is a common task to test explicit generative models \cite{Meng2020AutoregressiveSM}.

In this section, we validate the proposed DCD for training EBM on high-dimensional datasets and evaluate the image-denoising performance on four datasets, the MNIST, FashionMNIST, CIFAR10, and the SVHN datasets. 

\paragraph{Experiment Setting.}
We train EBM with DCD-VE and compare it with CD. We added the Gaussian noise with three strength levels on test images and evaluate the average root of the mean of the squared error (RMSE) of non-noised and denoised images. For the implementation of the EBM, we use the wide resnet\citep{Zagoruyko2016WideRN} model with GELU activations as our energy model. More details are put in Appendix.

Table 4 shows the denoising performance of Gaussian noise with different scales of the noise (low for 0.3, middle for 0.6, and high for 0.9). The DCD-VE performs consistently better than CD across different datasets and different noise strengths. We also surprisingly find that one advantage of the DCD is its impressive performance on large noise strength. As Figure \ref{fig:image_denoise} shows, for a high noise scale of 0.9, the EBM trained with CD fails to denoise successfully, while the EBM trained with DCD-VE still shows denoising ability.

\begin{table*}
\centering
\label{tab:exp_denoise}
\begin{scriptsize}
\caption{Average RMSE of clean and reconstructed input with Gaussian noise on datasets. (We set low, mid, and high-level noise as 0.3, 0.6, and 0.9.)}
\begin{center}
\begin{tabular}{l|cccccccccccc}
\toprule
\multicolumn{1}{c}{\multirow{2}{*}{Method}} 
& \multicolumn{3}{c}{MNIST}
& \multicolumn{3}{c}{FMNIST}                                 
& \multicolumn{3}{c}{CIFAFR10} 
& \multicolumn{3}{c}{SVHN}  \\
\multicolumn{1}{c}{} & low & mid & high & low & mid & high & low & mid      & high & low & mid & high    \\
\midrule
DCD & \textbf{0.165} & \textbf{0.194} & \textbf{0.303} & \textbf{0.170} & \textbf{0.217} & \textbf{0.497} & \textbf{0.129} & \textbf{0.193} &  \textbf{0.244} & \textbf{0.099} & \textbf{0.137} & \textbf{0.294} \\
CD & $0.194$ & $0.390$ & $1.099$ & $0.171$ & $0.2792$ & $0.872$ & $0.154$ & $0.317$ & $8.572$ & $0.124$ & $0.293$ & $6.938$ \\
\bottomrule
\end{tabular}
\end{center}
\end{scriptsize}
\end{table*}

\subsection{Image generation with time-dependent EBM}
\paragraph{Experiment Setting.} We use DCD to train EBMs for image generation on the CelebA dataset of a resolution of $32\times 32$. We use a time-dependent neural network with residual network architecture \citep{He2016DeepRL} as the implementation of the EBM. 
We use the VE diffusion with the diffusion coefficient $g(t)=t$ as the forward diffusion, which is the same as \citet{Karras2022ElucidatingTD}. We train the time-dependent energy-based model on the CelebA dataset which is downsampled to have a resolution of 32x32. We evaluate the Frechet Inception Score (FID) \citep{heusel2017gans} as a metric of generation performance. 

\paragraph{Performance.} 
Table \ref{fig:celaba} shows the performance of our trained EBMs for a generation. It shows that the DCD (VE) is capable of handling complex image datasets. It demonstrates that the proposed DCD is capable of training EBMs with comparable performance to DCGAN \cite{radford2015unsupervised} and other EBMs (i.e. EBM with FCE \citep{gao2020flow}), and superior performance to normalizing flow models and VAE. However, the performance is worse than EBM which requires more advanced tricks such as cooperating with flow models (CoopFlow \citep{Xie2022ATO}) and cooperating with GAN models (GEBM \citep{arbel2020generalized}). The right-hand side of Figure \ref{fig:toy_and_celeba} shows some generated samples from our trained EBM. In summary, the proposed DCD-VE is able to train time-dependent EBM with comparable generative performance as existing training methods. 

\section{Limitations and Future Works}
In this paper, we propose a novel family of probability divergences, the \emph{diffusion contrastive divergence} family. The DCD provides a special view that unifies the contrastive divergence as a special instance of the DCD. It also spurs new divergences for training EBM which overcomes two major drawbacks of the contrastive divergence. We also establish the connection of the proposed DCDs with existing recovery likelihood and the KL-contraction divergences. We validate the efficiency and superior performance of our proposed DCDs on several benchmark EBM tasks such as 2D energy modeling, image denoising, and image generation. 

However, the DCD also has its limitations. First, the calculation of DCD requires the computation of a higher-order derivative of the energy function, meaning that the energy-based model should be at least twice differentiable. Second, the long-time DCD requires the calculation of the evolved energy function. Such evolution is not easy to compute for the general diffusion process. We plan to leave the research of the long-time energy evolution of DCD in our further work.

\newpage

\bibliography{neurips_2023}
\bibliographystyle{IEEEtranN}


\newpage

\appendix
\section{Technical details}
\subsection{Proof of Theorem \ref{thm:1} (Section 3.1)}\label{app:prove_of_integral_thm}

Before we prove Theorem \ref{thm:1}, we give two lemmas to simplify the proof.
\begin{lemma}\label{lem:laplacian}
Assume function $p$ is positive and twice differentiable, then the following identity holds
\[
\Delta p(\xx) = p(\xx) \|\nabla_{\xx} \log p(\xx)\|_2^2 + p(\xx)\Delta \log p(\xx),
\]
where
\begin{align*}
    &\nabla_{\xx} \log p(\xx) = \sum_{i=1}^D \frac{\partial p(\xx)}{\partial {x}_i}, \ \Delta \log p(\xx) = \sum_{i=1}^D \frac{\partial^2 \log p(\xx)}{\partial {x}_i^2}.
\end{align*}
Here $x_i$ represents the $i$-th covariate of vector $\xx$ and $D$ is the data dimension.
\end{lemma}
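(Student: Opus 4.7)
The plan is to prove the identity by a straightforward componentwise calculation, exploiting the standard relation between the gradient of $p$ and the gradient of $\log p$ that holds whenever $p$ is positive. Since $p(\xx) > 0$, $\log p(\xx)$ is well-defined and (by the assumed twice differentiability of $p$) twice differentiable as well, so all quantities appearing in the statement make sense.

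First I would write the Laplacian as $\Delta p(\xx) = \sum_{i=1}^D \partial^2 p / \partial x_i^2$. Then I would use the key identity $\partial p / \partial x_i = p(\xx)\, \partial \log p(\xx)/\partial x_i$, which follows from $p = \exp(\log p)$ and the chain rule. Differentiating this again in $x_i$ via the product rule gives
\[
\frac{\partial^2 p}{\partial x_i^2} = \frac{\partial p}{\partial x_i}\cdot \frac{\partial \log p}{\partial x_i} + p(\xx)\cdot \frac{\partial^2 \log p}{\partial x_i^2} = p(\xx)\left(\frac{\partial \log p}{\partial x_i}\right)^2 + p(\xx)\frac{\partial^2 \log p}{\partial x_i^2}.
\]

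Summing over $i = 1, \dots, D$ and factoring $p(\xx)$ then immediately yields $\Delta p(\xx) = p(\xx)\|\nabla_{\xx}\log p(\xx)\|_2^2 + p(\xx)\Delta \log p(\xx)$, which is the claim. There is no real obstacle here: the only thing to be careful about is invoking positivity of $p$ to justify writing $\log p$ and taking its derivatives, which is explicitly assumed in the hypothesis. The entire argument is a two-line product-rule computation, so I would keep the proof short and simply present the key identity and the summation step.
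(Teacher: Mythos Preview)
Your proposal is correct and matches the paper's own proof essentially line for line: the paper also writes $\Delta p = \langle \nabla_{\xx}, p\,\nabla_{\xx}\log p\rangle$, applies the product rule, and substitutes $\nabla_{\xx} p = p\,\nabla_{\xx}\log p$ to arrive at the identity. The only cosmetic difference is that the paper uses divergence/inner-product notation rather than writing out the sum over coordinates, but the computation is identical.
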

\begin{proof}
With a slight abuse of notation, we write $\langle \nabla_{\xx}, \bm{f}(\xx) \rangle \coloneqq \sum_{i=1}^D \partial \bm{f}_i(\xx)/\partial {x}_i$. Since $p$ is twice differentiable, we have 
\begin{align*}
    \Delta p(\xx) &= \langle \nabla_{\xx}, \nabla_{\xx} p(\xx)\rangle \\
    &= \langle \nabla_{\xx}, p(\xx)\nabla_{\xx} \log p(\xx)\rangle =\langle \nabla_{\xx} p(\xx), \nabla_{\xx} \log p(\xx)\rangle + p(\xx)\langle \nabla_{\xx}, \nabla_{\xx}\log p(\xx)\rangle\\
    &= p(\xx)\|\nabla_{\xx} \log p(\xx)\|_2^2 + p(\xx)\Delta \log p(\xx)
\end{align*}
\end{proof}

\begin{lemma}\label{lem:stein}
Assume function $p(.)$ is a positive and twice differentiable probability density. Then the following identity holds
\begin{align*}
    \mathbb{E}_{p(\xx)}\log p(\xx) \|\nabla_{\xx} \log p(\xx)\|_2^2 = -\mathbb{E}_{p(\xx)} \big[ \|\nabla_{\xx} \log p(\xx)\|_2^2 + \log p(\xx) \Delta \log p(\xx) \big].
\end{align*}
\end{lemma}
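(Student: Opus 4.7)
The plan is to view this as a Stein-type identity and derive it by combining Lemma \ref{lem:laplacian} with integration by parts, in the vein of the standard score-function identities. The target identity can be rearranged as
\begin{align*}
\mathbb{E}_{p(\xx)}\bigl[\log p(\xx)\bigl(\|\nabla_{\xx}\log p(\xx)\|_2^2 + \Delta \log p(\xx)\bigr)\bigr] = -\mathbb{E}_{p(\xx)}\|\nabla_{\xx}\log p(\xx)\|_2^2,
\end{align*}
which is the form I would aim for directly, because Lemma \ref{lem:laplacian} is tailor-made to collapse the left-hand side.

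First I would apply Lemma \ref{lem:laplacian} to rewrite $\|\nabla_{\xx}\log p(\xx)\|_2^2 + \Delta \log p(\xx) = \Delta p(\xx)/p(\xx)$. Substituting into the integral representation of the expectation cancels the $p(\xx)$ factors and yields
\begin{align*}
\mathbb{E}_{p}\bigl[\log p\,(\|\nabla_{\xx}\log p\|_2^2 + \Delta \log p)\bigr] = \int \log p(\xx)\,\Delta p(\xx)\,\diff \xx.
\end{align*}

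Next I would carry out one round of integration by parts on the right-hand side, using $\Delta p = \langle \nabla_{\xx}, \nabla_{\xx} p\rangle$ and assuming $p$ together with $\nabla_{\xx} p$ decays fast enough at infinity that the boundary contribution vanishes (this is the standard regularity assumption under which Stein-type identities hold, and the statement already assumes $p$ is positive and twice differentiable so that $\log p$ and $\nabla \log p$ are well-defined). This gives
\begin{align*}
\int \log p(\xx)\,\Delta p(\xx)\,\diff \xx = -\int \nabla_{\xx}\log p(\xx)\cdot \nabla_{\xx} p(\xx)\,\diff \xx.
\end{align*}
Finally, using $\nabla_{\xx} p(\xx) = p(\xx)\nabla_{\xx}\log p(\xx)$ turns the integrand into $p(\xx)\|\nabla_{\xx}\log p(\xx)\|_2^2$, which recovers exactly $-\mathbb{E}_{p}\|\nabla_{\xx}\log p\|_2^2$ and closes the chain of equalities.

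The only subtle point is the suppression of the boundary term in the integration by parts; concretely, one needs $\log p(\xx)\,\nabla_{\xx} p(\xx)\to 0$ on large spheres, which holds under the usual tail-decay conditions implicit in these score-based derivations. Everything else is mechanical once Lemma \ref{lem:laplacian} has been used to convert the mixed $\|\nabla \log p\|^2 + \Delta \log p$ expression into $\Delta p/p$, so I expect the boundary-decay assumption to be the main (and essentially cosmetic) obstacle rather than any genuine computational difficulty.
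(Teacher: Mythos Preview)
Your proof is correct. The underlying mechanism is the same as the paper's---a single integration by parts---but the organization differs: the paper writes $\log p\,\|\nabla\log p\|_2^2 = \langle \log p\,\nabla\log p,\nabla\log p\rangle$ and applies Stein's identity directly with $\bm{f}=\log p\,\nabla\log p$, then expands $\langle\nabla,\log p\,\nabla\log p\rangle$ by the product rule; Lemma~\ref{lem:laplacian} is not invoked there. You instead rearrange the target, use Lemma~\ref{lem:laplacian} to collapse $\|\nabla\log p\|_2^2+\Delta\log p$ into $\Delta p/p$, and then integrate $\int \log p\,\Delta p$ by parts. Both routes land on the same boundary-decay requirement (what the paper packages as ``$\bm{f}$ lies in the Stein class of $p$''), and neither gains or loses generality; your version just makes explicit use of the preceding lemma while the paper's keeps the argument self-contained via the named Stein identity.
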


\begin{proof}
Notice that
\begin{align*}
    \mathbb{E}_{p(\xx)}\log p(\xx) \|\nabla_{\xx} \log p(\xx)\|_2^2 &= \mathbb{E}_{p(\xx)} \langle \log p(\xx) \nabla_{\xx} \log p(\xx), \nabla_{\xx}\log p(\xx)\rangle \\
    &= \mathbb{E}_{p(\xx)}-\langle \nabla_{\xx}, \log p(\xx) \nabla_{\xx}\log p(\xx) \rangle.
\end{align*}
The above equality holds because of Stein's identity \citep{stein1981estimation}, i.e., 
\[
\mathbb{E}_{p(\xx)} \langle \bm{f}(\xx),\nabla_{\xx} \log p(\xx) \rangle = -\mathbb{E}_{p(\xx)}\langle \nabla_{\xx}, \bm{f}(\xx)\rangle
\] 
for vector value function $\bm{f}$ which lies in Stein class of $p$ \footnote{A vector-value function $\bm{f}$ lies in Stein class of distribution $p$ means three conditions hold: 
\begin{itemize}
    \item $\bm{f}$ is 2nd-order smooth;
    \item both $\|\bm{f}\|_2^2$  and $\|\nabla_{\xx} \bm{f}^T\|_F^2$ is integrable w.r.t. $p$. The notation $\|.\|_F$ represents the Frobenius norm.
    \item $p(\xx) \|\nabla_{\xx} \bm{f}^T(\xx)\|_F \to 0$ when $\|\xx\|_2 \to \partial support(p)$
\end{itemize}
}. 
Thus the proof is finished with
\begin{align*}
    \mathbb{E}_{p(\xx)}\log p(\xx) \|\nabla_{\xx} \log p(\xx)\|_2^2= -\mathbb{E}_{p(\xx)}\bigg[ \|\nabla_{\xx} \log p(\xx)\|_2^2 + \log p(\xx) \Delta \log p(\xx)\bigg]
\end{align*}
\end{proof}
We give the proof for Theorem \ref{thm:1} with the above two lemmas \ref{lem:laplacian} and \ref{lem:stein}.
\begin{proof}
Recall that the two distributions $p,q$ evolve along a general Ito's diffusion process $$d\xx_t = \bm{F}(\xx_t,t)\mathrm{d}t + \bm{G}(t)\mathrm{d} \bm{w}_t.$$
Here $\bm{F}(\xx,t)$ is a vector value function, and $\bm{G}(t)$ is a scalar function of $t$.
Note that $p_0=p,q_0=q$. We denote $p^{(F,G,t)}_t,q^{(F,G,t)}_t$ as $p_t,q_t$ for short. The KL divergence between $p_t,q_t$ is defined as 
$$\mathcal{D}_{KL}(p_t,q_t) = \mathbb{E}_{p_t}\log \frac{p_t(\xx)}{q_t(\xx)}= \int p_t\log \frac{p_t}{q_t}\diff \xx.$$
We declare all integrals are w.r.t. $\xx$ and omit the $\diff \xx$ in integral formulas for simplification. The change rate of KL divergence is
\begin{align}
    &\frac{\diff}{\diff t}\mathcal{D}_{KL}(p_t,q_t)\frac{\diff}{\diff t}\int p_t(\xx)\log \frac{p_t(\xx)}{q_t(\xx)}\nonumber\\
    &= \int \frac{\diff p_t}{\diff t}\log p_t - \int \frac{\diff p_t}{\diff t}\log q_t + \int \frac{\diff p_t}{\diff t} - \int  \frac{p_t}{q_t}\frac{\diff q_t}{\diff t}\label{eq:change_rate}\\
    &:= A + B + C + D.\nonumber
\end{align}
The third term 
$$C = \int \frac{\diff p_t}{\diff t} = \frac{\diff }{\diff t}\int p_t = \frac{\diff}{\diff t}1 = 0.$$ 
Hence the above equation remains 3 terms. 
By the Fokker-Planck equation \eqref{equ:fp_equation}, the evolved density follows
\begin{align*}
    \frac{\diff p_t}{\diff t} &= -\langle \nabla_{\xx}, p_t \bm{F} \rangle + \frac{1}{2}\bm{G}^2(t)\Delta p_t\\
    &= -p_t \langle \nabla_{\xx} \log p_t, \bm{F}\rangle - p_t \langle \nabla_{\xx}, \bm{F}\rangle + \frac{1}{2}\bm{G}^2(t) p_t \|\nabla_{\xx} \log p_t\|_2^2 + \frac{1}{2}\bm{G}^2(t) p_t \Delta \log p_t.
\end{align*}
Substitute the above equation to equation (\ref{eq:change_rate}), the first term $A$ becomes 
\begin{align}\label{eq:1st_term}
    &\int \frac{\diff p_t}{\diff t}\log p_t\\
    &= \int p_t \bigg[ \frac{1}{2}\bm{G}^2(t)\|\nabla_{\xx} \log p_t\|_2^2 + \frac{1}{2}\bm{G}^2(t)\Delta \log p_t -\bm{F}(x,t)^T \nabla_{\xx} \log p_t - \langle \nabla_{\xx}, \bm{F}(\xx,t)\rangle \bigg]\log p_t \nonumber\\
    &=\mathbb{E}_{p_t} \bigg[ \frac{1}{2}\bm{G}^2(t)\log p_t \|\nabla_{\xx} \log p_t\|_2^2 + \frac{1}{2}\bm{G}^2(t)\log p_t \Delta \log p_t \nonumber \\
    & - \log p_t\langle \bm{F}(\xx,t),\nabla \log p_t \rangle -\log p_t \langle \nabla_{\xx}, \bm{F}(\xx,t)\rangle  \bigg] \nonumber\\
    &=\mathbb{E}_{p_t} \bigg[ -\frac{1}{2}\bm{G}^2(t)\big[ \|\nabla_{\xx} \log p_t\|_2^2 + \log p_t \Delta \log p_t \big] + \frac{1}{2}\bm{G}^2(t)\log p_t \Delta \log p_t \nonumber \\
    & - \log p_t\langle \bm{F}(x,t),\nabla \log p_t \rangle -\log p_t \langle \nabla_{\xx}, \bm{F}(\xx,t)\rangle  \bigg]. \nonumber
\end{align}
By Stein's identity, 
\begin{align*}
    &\mathbb{E}_{p_t} \log p_t \langle \bm{G},\nabla_{\xx} \log p_t \rangle= \mathbb{E}_{p_t} \langle (\log p_t) \bm{F},\nabla_{\xx} \log p_t \rangle\\
    &= -\mathbb{E}_{p_t} \langle \nabla_{\xx}, \log p_t \bm{F} \rangle= -\mathbb{E}_{p_t} \bigg[ \langle \nabla_{\xx}\log p_t, \bm{F} \rangle + \log p_t\langle\nabla_{\xx},\bm{F} \rangle \bigg].
\end{align*}
This term (\ref{eq:1st_term}) becomes
\begin{align*}
    &\mathbb{E}_{p_t} \bigg[ -\frac{1}{2}\bm{G}^2(t) \|\nabla_{\xx} \log p_t\|_2^2 - \log p_t\langle \bm{F},\nabla_{\xx} \log p_t \rangle -\log p_t \langle \nabla_{\xx}, \bm{F}\rangle  \bigg]\\
    &=\mathbb{E}_{p_t} \bigg[ -\frac{1}{2}\bm{G}^2(t) \|\nabla_{\xx} \log p_t\|_2^2 + \big[ \langle \bm{F}, \nabla_{\xx} \log p_t\rangle + \log p_t \langle \nabla_{\xx}, \bm{F}\rangle \big] -\log p_t \langle \nabla_{\xx}, \bm{F}\rangle  \bigg]\\
    &=\mathbb{E}_{p_t} \bigg[ -\frac{1}{2}\bm{G}^2(t) \|\nabla_{\xx} \log p_t\|_2^2 + \langle \bm{F}, \nabla_{\xx} \log p_t\rangle  \bigg].
\end{align*}
Next, we calculate the second term $B$ with a similar argument
\begin{align*}
    &B = \int -\frac{\diff p_t}{\diff t}\log q_t\\
    &= -\mathbb{E}_{p_t} \log q_t \bigg[\frac{1}{2}\bm{G}^2(t)\|\nabla_{\xx} \log p_t\|_2^2 + \frac{1}{2}\bm{G}^2(t)\Delta \log p_t - \langle \bm{F},\nabla_{\xx} \log p_t \rangle - \langle\nabla_{\xx}, \bm{F} \rangle \bigg]\\
    &=-\mathbb{E}_{p_t} \bigg[ -\frac{1}{2}\bm{G}^2(t)\langle \nabla_{\xx} \log p_t, \nabla_{\xx} \log q_t\rangle + \langle \bm{F}, \nabla_{\xx} \log q_t\rangle \bigg].
\end{align*}
The fourth term $D$ writes
\begin{align*}
    D& = \int -\frac{p_t}{q_t} \frac{\diff q_t}{\diff t}\\
    &= -\int\frac{p_t}{q_t} q_t\bigg[ \frac{1}{2}\bm{G}^2(t)\|\nabla_{\xx} \log q_t\|_2^2 + \frac{1}{2}\bm{G}^2(t)\Delta \log q_t - \langle \bm{F}, \nabla_{\xx} \log q_t\rangle - \langle\nabla_{\xx}, \bm{F} \rangle \bigg]\\
    &=-\mathbb{E}_{p_t}\bigg[ \frac{1}{2}\bm{G}^2(t)\|\nabla_{\xx} \log q_t\|_2^2 + \frac{1}{2}\bm{G}^2(t)\Delta \log q_t - \langle \bm{F}, \nabla_{\xx} \log q_t\rangle - \langle\nabla_{\xx}, \bm{F} \rangle \bigg].
\end{align*}
With Stein's identity, 
$$\mathbb{E}_{p_t}\langle \nabla_{\xx} \log q_t, \nabla_{\xx} \log p_t\rangle = -\mathbb{E}_{p_t} \langle\nabla_{\xx}, \nabla_{\xx} \log q_t \rangle = -\mathbb{E}_{p_t} \Delta \log q_t.$$
Substitute the above equality to the fourth term, we have
\begin{align*}
    D =-\mathbb{E}_{p_t} \bigg[ \frac{1}{2}\bm{G}^2(t)\|\nabla_{\xx} \log q_t\|_2^2 - \frac{1}{2}\bm{G}^2(t)\langle\nabla_{\xx} \log q_t, \nabla_{\xx} \log p_t \rangle -\langle \bm{F}, \nabla_{\xx} \log q_t\rangle + \langle \bm{F},\nabla_{\xx} \log p_t \rangle \bigg].
\end{align*}
Combine all three terms, we have
\begin{align}
    &\frac{d}{\diff t}\mathcal{D}_{KL}(p_t,q_t) = \int \frac{\diff p_t}{\diff t}\log p_t - \int \frac{\diff p_t}{\diff t}\log q_t - \int \frac{p_t}{q_t} \frac{\diff q_t}{\diff t}\nonumber \\
    &= -\mathbb{E}_{p_t}\bigg[ \frac{1}{2}\bm{G}^2(t)\|\nabla_{\xx} \log p_t\|_2^2 + \frac{1}{2}\bm{G}^2(t)\|\nabla_{\xx} \log q_t\|_2^2 - \bm{G}^2(t)\langle \nabla_{\xx} \log p_t, \nabla_{\xx} \log q_t\rangle \bigg] \nonumber\\
    \label{equ:dkl_dt}
    &=-\frac{1}{2}\mathbb{E}_{p_t} \bm{G}^2(t)\|\nabla_{\xx} \log p_t(x) - \nabla_{\xx} \log q_t(x)\|_2^2
\end{align}
So the integral representation writes
\begin{align*}
    &\mathcal{D}_{KL}(p_T,q_T) - \mathcal{D}_{KL}(p_0,q_0) = \int_0^T \frac{d}{\diff t}\mathcal{D}_{KL}(p_t,q_t)\diff t\\
    &= -\int_0^T \frac{1}{2}\mathbb{E}_{p_t} \bm{G}^2(t)\|\nabla_{\xx} \log p_t(\xx) - \nabla_{\xx} \log q_t(\xx)\|_2^2\diff t.
\end{align*}
\end{proof}

\subsection{Proof of Langevin dynamic's stationary property}\label{app:ld_stationary}
The stationary property states that $p_\theta$ is stationary under EBM-induced Langevin dynamics. 
\begin{proof}
Notice that the evolution of a probability under EBM Langevin dynamics \ref{equ:ld} is governed by the Fokker-Planck equation \eqref{equ:fp_equation}
\[
\frac{\diff}{\diff t}p(\xx,t) = -\langle\nabla_{\xx}, \frac{1}{2}\nabla_{\xx} \log p_\theta(\xx)p(\xx,t) \rangle + \frac{1}{2}\Delta_{\xx} p(\xx,t).
\]
Since $\Delta p(\xx,t) = \langle\nabla_{\xx}, \nabla_{\xx} p(\xx,t) \rangle$, we have 
\begin{align*}
    \Delta_{\xx} p(\xx,t) = \langle\nabla_{\xx}, \nabla_{\xx} p(\xx,t) \rangle =\langle \nabla_{\xx}, p(\xx,t)\nabla_{\xx} \log p(\xx,t)\rangle
\end{align*}
Combining the above, we have the simplified Fokker-Planck equation
\begin{align*}
    \frac{\diff}{\diff t}p(\xx,t) = \frac{1}{2}\langle\nabla_{\xx}, \frac{1}{2}\big[ \nabla_{\xx} \log p(\xx,t)-\nabla_{\xx} \log p_\theta(\xx)\big]p(\xx,t) \rangle
\end{align*}
Substitute $p(\xx,t) = p_\theta(\xx)$, we have
\[
 \frac{\diff}{\diff t}p(\xx,t) = 0.
\]
So $p(\xx,t) = p_\theta(\xx)$ is stationary under $p_\theta$ induced Langevin dynamics.
\end{proof}

\subsection{Non-negativity of CD (Theorem \ref{thm:1})}
Recall the definition of CD \eqref{eqn:cd_0},
\begin{align*}
    \mathcal{D}_{CD}(p_d,p_\theta) = \mathcal{D}_{KL}(p_d, p_\theta) - \mathcal{D}_{KL}(p_{d,\theta}^{(T)}, p_\theta),
\end{align*}
The non-negativity of CD in fact comes as a corollary of Theorem \ref{thm:1} as we have proved in \ref{app:prove_of_integral_thm}.
\begin{proof}
Recall the definition of CD,
\[
\mathcal{D}_{CD}(p_d,p_\theta) = \mathcal{D}_{KL}(p_d, p_\theta) - \mathcal{D}_{KL}(p_d^{(T)}(\theta), p_\theta).
\]
Here the $p_{d}^{(T)}(\theta)$ denote the $T$ time evolved EBM distribution under EBM Langevin dynamcis
\[ 
d\xx_t = \frac{1}{2}\nabla_{\xx_t}\log p_\theta(\xx_t) \diff t + \diff \bm{w}_t.
\]
Recall that $p_\theta(\xx) = p_\theta^{(T)}(\xx)$ as we show in \ref{app:ld_stationary}, then in definition of CD and CD equals to 
\[
\mathcal{D}_{CD}(p_d,p_\theta) = \mathcal{D}_{KL}(p_d, p_\theta) - \mathcal{D}_{KL}(p_d^{(T)}(\theta), p_\theta^{(T)}).
\]
By Theorem \ref{thm:1}, 
\begin{align*}
    \mathcal{D}_{CD}(p_d,p_\theta) = \frac{1}{2}\int_{0}^T \mathbb{E}_{\xx_t \sim p_d^{(t)}(\xx_t)}\|\nabla_{\xx_t} \log p_\theta^{(t)}(\xx_t)- \nabla_{\xx_t} \log q^{(t)}(\xx_t)\|^2_2 \diff t\geq 0
\end{align*}
\end{proof}

\subsection{Non-negaligibility of the extra term of CD (Equation 5)}
Recall the gradient formula of CD \eqref{eq:cd_grad}.
\begin{align*}
    &\frac{\partial}{\partial \theta}\mathcal{D}_{CD}(p_d,p_\theta) = \mathbb{E}_{ p_{d,\theta}^{(T)}}\big[\frac{\partial}{\partial \theta}f_\theta(\xx) \big] -\mathbb{E}_{ p_d}\big[\frac{\partial}{\partial \theta} f_\theta(\xx)\big] - \mathbb{E}_{p_{d,\theta}^{(T)}}\bigg[\log p_\theta(\xx) \frac{\partial}{\partial \theta}\log p_{d,\theta}^{(T)}(\xx) \bigg].
\end{align*}
The third term is 
\begin{align}\label{term:third_term}
    (3) = - \mathbb{E}_{p_{d,\theta}^{(T)}}\bigg[\log p_\theta(\xx) \frac{\partial}{\partial \theta}\log p_{d,\theta}^{(T)}(\xx) \bigg].
\end{align}
For ease of expression, we may omit the notation $\diff \xx$ in the integral. If $p_{d,\theta}^{T}(\xx) \to p_\theta(\xx)$ as we assumed, the term \ref{term:third_term} turns to 
\begin{align}
    (3) &= - \mathbb{E}_{p_\theta}\bigg[\log p_\theta(\xx) \frac{\partial}{\partial \theta}\log p_\theta(\xx) \bigg] \nonumber\\
    &= -\int p_\theta(\xx) \log p_\theta(\xx) \frac{1}{p_\theta(\xx)}\frac{\partial}{\partial\theta}p_\theta(\xx) \diff \xx \nonumber \\
    &= -\int \log p_\theta(\xx) \frac{\partial}{\partial\theta} p_\theta(\xx)\diff \xx \nonumber\\
    &= - \frac{\partial}{\partial\theta} \int \log p_\theta(\xx) p_\theta(\xx)\diff \xx + \int p_\theta(x)\frac{\partial}{\partial\theta} \log p_\theta(\xx) \nonumber \\
    & = -\frac{\partial}{\partial\theta}\mathbb{E}_{p_\theta} \log p_\theta(\xx) + \int \frac{\partial}{\partial\theta} p_\theta(\xx)\nonumber \\
    \label{term:3}
    & = -\frac{\partial}{\partial\theta} \mathbb{E}_{p_\theta} \log p_\theta(\xx) + \frac{\partial}{\partial\theta}\int p_\theta(\xx) \\
    & = -\frac{\partial}{\partial\theta} \mathbb{E}_{p_\theta} \log p_\theta(\xx) + \frac{\partial}{\partial\theta} \mathbf{1} \nonumber \\
    & = -\frac{\partial}{\partial\theta} \mathbb{E}_{p_\theta} \log p_\theta(\xx) \nonumber
\end{align}
The equality \ref{term:3} holds if $p_\theta(x)$ satisfies the conditions. Now if the density function is satisfied the condition that (1). $p_\theta(x)$ is Lebesgue integrable for $\xx$ with each $\theta$; (2). For almost all $\xx \in \mathbf{R}^D$, the partial derivative $\partial p_\theta(\xx)/\partial\theta$ exists for all $\theta\in \Theta$. (3) there exists an integrable function $g(.): \mathbf{R}^D \to \mathbf{R}$, such that $p_\theta(\xx) \leq g(\xx)$ for all $\xx$ in its domain. Then the derivative w.r.t $\theta$ can be exchanged with the integral over $\xx$, i.e. 
\begin{align*}
\int \frac{\partial}{\partial\theta}p_\theta(\xx)\diff \xx = \frac{\partial}{\partial \theta} \int p_\theta(\xx)\diff \xx.
\end{align*}

\subsection{Tractable form of DCD with parameter-free diffusion}
Thus the DCD under the above SDE has the form
\begin{align*} &\mathcal{D}_{DCD}^{(\bm{F},\bm{G},T)}(p_d, p_\theta) \\
    &= \mathbb{E}_{\xx_0\sim p_d}\big[\log p_d(\xx_0) - f_\theta(\xx_0) \big] - \mathbb{E}_{\xx_0\sim p_d,\atop \xx_t\sim p(\xx_t|\xx_0)}\big[\log p_d^{(\bm{F},\bm{G},T)}(\xx_t) - f_\theta^{(\bm{F},\bm{G},T)}(\xx_t) \big].
\end{align*}
Here $f_\theta^{(\bm{F},\bm{G},T)}$ are time $T$ marginal energy under diffusion process (\eqref{form:1}). The term $\log p_d(\xx)$ and $\log p_d^{(\bm{F},\bm{G},T)}$ are independent of parameter $\theta$ since the diffusion process is parameter-free. As a result, we can drop them when using gradient-based optimization algorithms. Thus, we have the final tractable learning objective based on DCD as
\begin{align*}
    \mathcal{L}_{DCD}(\theta) =& \mathbb{E}_{\xx_0\sim p_d, \xx_t\sim p(\xx_t|\xx_0)}\big[f_\theta^{(\bm{F},\bm{G},T)}(\xx_t)\big] - \mathbb{E}_{\xx_0\sim p_d}\big[f_\theta(\xx_0) \big].
\end{align*}

\subsection{More backgrounds on VE diffusion}
\paragraph{The VE Diffusion.}
Recall the VE diffusion \eqref{eq:ve_forward},
\begin{align*}
    \diff \xx_t = g(t)\diff \bm{w}_t,
\end{align*}
is also favored for its easy-to-simulate property. The marginal transition of VE writes
\begin{align}
    p(\xx_t|\xx_0) = \mathcal{N}(\xx_0, \sigma(t)\mathbf{I}).
\end{align}
$\sigma(t) = \int_0^t g(s)\diff s$. Similar to the VP diffusion, marginal samples of the VE diffusion are also cheap to obtain and parameter-free.

\subsection{Proof of Proposition 1 (Section 3.1)}
Recall the Proposition \ref{sec:energy_evolve}.
\begin{proposition*}
Assume $p_\theta^{(0)}(\xx) = e^{f_\theta(\xx)}/Z_\theta$ where $Z_\theta$ is a parameter-dependent normalizing constant. Assume $p_\theta^{(t)}$ denotes the evolved density along a diffusion process \eqref{form:1}, then for any fixed $\xx$, the energy value $p_\theta^{(t)}(\xx)$ evolves according to a PDE
\[
d\log p_\theta^{(t)}(\xx)/\diff t=\mathcal{O}(\nabla_{\xx} \log p_\theta^{(t)}),
\]
where $\mathcal{O}(\nabla_{\xx} \log p_\theta^{(t)})$ is the following operator which is independent of the normalizing constant, 
\begin{align*}
    &\langle \bm{G}^2(t)\nabla_{\xx} \log p_\theta^{(t)}(\xx)/2 - \bm{F}(\xx,t), \nabla_{\xx} \log p_\theta^{(t)}(\xx) \rangle + \langle \nabla, \bm{G}^2(t)\nabla_{\xx} \log p_\theta^{(t)}(\xx)/2 - \bm{F}(\xx,t) \rangle.
\end{align*}
\end{proposition*}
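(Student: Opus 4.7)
The plan is to derive the PDE for $\log p_\theta^{(t)}$ directly from the Fokker-Planck equation satisfied by $p_\theta^{(t)}$, by turning a linear equation for the density into a (nonlinear) equation for the log-density, and then to observe that the resulting right-hand side depends on $p_\theta^{(t)}$ only through $\nabla_{\xx}\log p_\theta^{(t)}$, so that the additive constant $-\log Z_\theta$ drops out.

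First I would write down the Fokker-Planck equation \eqref{equ:fp_equation} for the density $p_\theta^{(t)}$ evolving under the diffusion \eqref{form:1}, namely
\begin{align*}
\frac{\partial}{\partial t} p_\theta^{(t)}(\xx) = -\langle \nabla_{\xx}, p_\theta^{(t)}(\xx)\bm{F}(\xx,t)\rangle + \tfrac{1}{2}\bm{G}^2(t)\Delta_{\xx} p_\theta^{(t)}(\xx).
\end{align*}
Then I would expand the divergence of the product as $\langle \nabla_{\xx}, p\bm{F}\rangle = p\langle \nabla_{\xx}\log p, \bm{F}\rangle + p\langle \nabla_{\xx}, \bm{F}\rangle$, and I would apply Lemma \ref{lem:laplacian} to rewrite $\Delta_{\xx} p = p\|\nabla_{\xx}\log p\|_2^2 + p\,\Delta_{\xx}\log p$. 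Dividing both sides by $p_\theta^{(t)}$ (which is legitimate wherever the density is positive) converts the left-hand side into $\partial_t \log p_\theta^{(t)}$ and gives the four-term expression
\begin{align*}
\frac{\partial}{\partial t}\log p_\theta^{(t)}(\xx) = \tfrac{1}{2}\bm{G}^2(t)\|\nabla_{\xx}\log p_\theta^{(t)}\|_2^2 + \tfrac{1}{2}\bm{G}^2(t)\Delta_{\xx}\log p_\theta^{(t)} - \langle \bm{F}, \nabla_{\xx}\log p_\theta^{(t)}\rangle - \langle \nabla_{\xx}, \bm{F}\rangle.
\end{align*}

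Next I would simply regroup the four terms into the two inner products claimed by the proposition. The quadratic term in $\nabla\log p$ and the drift-gradient term combine into $\langle \tfrac{1}{2}\bm{G}^2\nabla_{\xx}\log p_\theta^{(t)} - \bm{F},\, \nabla_{\xx}\log p_\theta^{(t)}\rangle$, while the Laplacian-of-log and the divergence of $\bm{F}$ combine into $\langle \nabla_{\xx},\, \tfrac{1}{2}\bm{G}^2\nabla_{\xx}\log p_\theta^{(t)} - \bm{F}\rangle$ (here we use that $\bm{G}$ depends only on $t$, so it passes through $\nabla_{\xx}$). This matches the operator $\mathcal{O}(\nabla_{\xx}\log p_\theta^{(t)})$ stated in the proposition.

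Finally, I would point out the key structural observation that delivers the ``$Z_\theta$-free'' conclusion: the right-hand side depends on $p_\theta^{(t)}$ only through $\nabla_{\xx}\log p_\theta^{(t)}$ (both directly, and inside $\nabla_{\xx}\cdot(\bm{G}^2\nabla_{\xx}\log p_\theta^{(t)}/2)$). Since at $t=0$ we have $\nabla_{\xx}\log p_\theta^{(0)} = \nabla_{\xx} f_\theta - \nabla_{\xx}\log Z_\theta = \nabla_{\xx} f_\theta$, the initial datum for the score is independent of $Z_\theta$, so the whole evolution of $\nabla_{\xx}\log p_\theta^{(t)}$ proceeds without ever seeing $Z_\theta$, and hence $\partial_t\log p_\theta^{(t)}$ is independent of $Z_\theta$ as well. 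The main subtlety, rather than any hard computation, is the careful bookkeeping in the regrouping step and the observation that $\bm{G}(t)$ commutes with $\nabla_{\xx}$; there is no real obstacle, only attention to signs and the use of Lemma \ref{lem:laplacian} at the right moment.
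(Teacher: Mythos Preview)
Your derivation of the PDE is correct and is essentially the same computation the paper carries out: start from Fokker--Planck, use Lemma~\ref{lem:laplacian} to rewrite $\Delta p$, divide by $p$, and regroup into the two inner products. Your observation that the right-hand side depends on $p_\theta^{(t)}$ only through $\nabla_{\xx}\log p_\theta^{(t)}$ is exactly what the proposition asserts when it says $\mathcal{O}$ is independent of the normalizing constant.

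The one substantive difference is in how the ``$Z_\theta$-free'' part is argued. You stop at the structural observation that the score kills additive constants, which suffices for the proposition as stated. The paper's proof goes further and shows that the normalizing constant is actually \emph{time-invariant}, i.e.\ $\frac{\diff}{\diff t}Z_\theta^{(t)}=0$: it expands $Z_\theta^{(t+\diff t)}$ to first order and then uses Stein's identity twice to show $\mathbb{E}_{p_\theta^{(t)}}\mathcal{O}(f_\theta^{(t)})=0$. This stronger statement is not needed for the proposition itself, but it is what the surrounding text relies on to conclude that $\log Z_\theta$ cancels exactly between the $t=0$ and $t=T$ terms in the DCD loss \eqref{equ:dcd_tractable}. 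Your argument gives $Z_\theta$-independence of the flow; the paper's gives constancy of $Z_\theta^{(t)}$ in $t$ --- a slightly different (and for the downstream application, more directly usable) conclusion.
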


\begin{proof}
    Following the Fokker-Planck equation \ref{equ:fp_equation}, the density $p_\theta^{(t)}(\xx) = e^{f_\theta^{(t)}(\xx)}/Z_\theta$ evolves with equation:
\begin{align*}
    &\frac{\diff}{\diff t}p_\theta^{(t)}(\xx) = \\
    & =\langle \bm{G}^2(t)\nabla_{\xx} \log p_\theta^{(t)}(\xx)/2 - \bm{F}(\xx,t), \nabla_{\xx} \log p_\theta^{(t)}(\xx) \rangle + \langle \nabla, \bm{G}^2(t)\nabla_{\xx} \log p_\theta^{(t)}(\xx)/2 - \bm{F}(\xx,t) \rangle.
\end{align*}
Denote $f_\theta^{(t)}(\xx) = \log p_\theta^{(t)}(\xx) + \log Z^{(t)}_\theta$. Then we have $\nabla_{\xx} \log p_\theta^{(t)}(\xx) = \nabla_{\xx} f_\theta^{(t)}(\xx)$. The evolution of $f_\theta^{(t)}$ thus follows a partial differential equation (Fokker-Planck equation), i.e.,
\begin{align*}
    &\diff f_\theta^{(t)}(\xx)/\diff t = \langle \bm{G}^2(t)\nabla_{\xx} f_\theta^{(t)}(\xx)/2 - \bm{F}(\xx,t), \nabla_{\xx} f_\theta^{(t)}(\xx) \rangle + \langle \nabla_{\xx}, \bm{G}^2(t)\nabla_{\xx} f_\theta^{(t)}(\xx)/2 - \bm{F}(x,t) \rangle\\
    &=\mathcal{O}(f_\theta^{(t)},\xx).
\end{align*}
In the above equation,
$$\mathcal{O}(f) = \bm{G}^2\| \nabla_{\xx} f \|^2 - \langle \bm{F}(.), \nabla_{\xx} \bm{F}\rangle + \bm{G}^2/2 \Delta f - \langle \nabla_{\xx}, \bm{F}(.) \rangle.$$
Thus the $T$ time energy function equals $$f_\theta^{(T)}(\xx) = f_\theta(\xx) + \int_{t=0}^T \mathcal{O}(f_\theta^{(t)},\xx)\diff t,$$
where $f_\theta^{(t)}$ is the solution of the above energy diffusion ODE. We can derive the change of normalizing constant with the following argument. By writing
$$f_\theta^{(t+\diff t)} = \mathcal{O}(f_\theta^{(t)})\diff t + f_\theta^{(t)},$$
we have 
\[
\exp(f_\theta^{(t+\diff t)}) =\exp(f_\theta^{(t)})\exp(\mathcal{O}(f_\theta^{(t)})\diff t)=\exp(f_\theta^{(t)})(1+\mathcal{O}(f_\theta^{(t)}) + o(\diff t^2)).
\]
Taking integral w.r.t $x$ on both sides, we have
\begin{align}
    Z_\theta^{(t+\diff t)} &=\int \exp(f_\theta^{(t+\diff t)})\diff \xx \nonumber\\
    &= \int \exp(f_\theta^{(t)})\bigg[1+\mathcal{O}(f_\theta^{(t)}) + o(\diff t^2) \bigg]dx \nonumber\\
    &= Z_\theta^{(t)}\bigg[1+ \int \frac{\exp(f_\theta^{(t)})(\xx)}{Z_\theta^{(t)}}\mathcal{O}(f_\theta^{(t)})(\xx)\diff \xx\bigg] + o(\diff t^2) \nonumber\\
    &= Z_\theta^{(t)}\bigg[1+ \int \mathbb{E}_{p_\theta^{(t)}}\mathcal{O}(f_\theta^{(t)})(\xx)\diff \xx\bigg] + o(\diff t^2)\nonumber\\
    &= Z_\theta^{(t)}\bigg[1+ \mathbb{E}_{p_\theta^{(t)}}\mathcal{O}(f_\theta^{(t)})(\xx)\bigg] + o(\diff t^2).\label{eq:nc_ode}
\end{align}

Note that 
\begin{align*}
    &\mathbb{E}_{p_\theta^{(t)}}\mathcal{O}(f_\theta^{(t)})(\xx)=\mathbb{E}_{p_\theta^{(t)}}\mathcal{O}(\log p_\theta^{(t)})(\xx)\\
    &=\mathbb{E}_{p_\theta^{(t)}}\bigg[\langle \bm{G}^2(t)\nabla_{\xx} \log p_\theta^{(t)}(\xx)/2 - \bm{F}(\xx,t), \nabla_{\xx} \log p_\theta^{(t)}(\xx) \rangle + \langle \nabla_{\xx}, \bm{G}^2(t)\nabla_{\xx} \log p_\theta^{(t)}(\xx)/2 - \bm{F}(\xx,t) \rangle \bigg]\\
    &=\mathbb{E}_{p_\theta^{(t)}}\bigg[\frac{1}{2}\bm{G}^2(t)\|\nabla_{\xx} \log p_\theta^{(t)}(\xx)\|^2_2 + \bm{G}^2(t)\nabla_{\xx} \log p_\theta^{(t)}(\xx) + \bm{F}^T(\xx,t)\nabla_{\xx} \log p_\theta^{(t)}(\xx) + \nabla_{\xx} \bm{F}(\xx,t) \bigg].
\end{align*}
Through Stein's identity, we have 
$$\mathbb{E}_{p_\theta^{(t)}}\bigg[\|\nabla_{\xx} \log p_\theta^{(t)}(\xx)\|_2^2 + \Delta_{\xx} \log p_\theta^{(t)}(\xx) \bigg] = 0,$$
$$\mathbb{E}_{p_\theta^{(t)}}\bigg[ \bm{F}(\xx,t)^T\nabla_{\xx} \log p_\theta^{(t)}(\xx) + \nabla_{\xx} \mathbf{F}(\xx,t)\bigg]=0.$$
Thus we have 
\begin{align}
    \mathbb{E}_{p_\theta^{(t)}}\mathcal{O}(f_\theta^{(t)})(\xx)=0 \label{eq:equal0}
\end{align}
Substituting equation (\ref{eq:equal0}) into (\ref{eq:nc_ode}), we have
\begin{align*}
    Z_\theta^{(t+\diff t)} = Z_\theta^{(t)} + o(\diff t^2).
\end{align*}
Thus 
\begin{align*}
    \frac{\diff}{\diff t}Z_\theta^{(t)} = 0.
\end{align*}
The normalizing constant remains unchanged. Since the operator $\mathcal{O}$ only depends on the $\nabla_{\xx} f$ term, the normalizing constant does not influence the energy evolution. Then we have $\log p_\theta^{(t)}(\xx) = f_\theta^{(t)}(\xx) - \log Z_\theta$. So the normalizing constant $\log z_\theta$ can be abstracted in the loss function as
\begin{align*}
    &\mathcal{L}_{DCD}(\theta) = \mathbb{E}_{\xx_T\sim p_T(\xx_T)}\big[\log p_\theta^{(T)}(\xx_T) \big] - \mathbb{E}_{\xx_0\sim p_d}\big[\log p_\theta^{(0)}(\xx_0) \big]\\
    &= \mathbb{E}_{\xx_T\sim p_T(\xx_T)}\big[ f_\theta^{(T)}(\xx_T) -\log Z_\theta \big] - \mathbb{E}_{\xx_0\sim p_d}\big[\log p_\theta^{(0)}(\xx_0) - \log Z_\theta \big]\\
    &= \mathbb{E}_{\xx_T\sim p_T(\xx_T)}\big[ f_\theta^{(T)}(\xx_T) \big] - \mathbb{E}_{\xx_0\sim p_d}\big[\log p_\theta^{(0)}(\xx_0)\big].
\end{align*}
\end{proof}

\subsection{Detailed proof of connections to DRL (Section 3.2)}
The expected recovery likelihood is
\[
\mathbb{E}_{\xx\sim p_d,\Tilde{\xx}\sim p_d^{(\sigma)}(\Tilde{\xx})}\log p_\theta(\xx|\Tilde{\xx}).
\]
Since $p^{(\sigma)}(\Tilde{\xx}|\xx)$ and $p_d(\xx)$ are independent of parameter $\theta$, the objective is equivalent to minimizing
\begin{align*}
    & -\mathbb{E}_{p_d(\xx)p^{(\sigma)}(\Tilde{\xx}|\xx)}\bigg[\log \frac{p_\theta(\xx|\Tilde{\xx})p_\theta^{(\sigma)}(\Tilde{\xx})}{p_d(\xx|\Tilde{\xx})p_d^{(\sigma)}(\Tilde{\xx})} - \log \frac{p_\theta^{(\sigma)}(\Tilde{\xx})}{p_d^{(\sigma)}(\Tilde{\xx})} \bigg]\\
    =&\mathbb{E}_{p_d(\xx)p_\sigma(\Tilde{\xx}|\xx)}\bigg[\log \frac{p_d(\xx,\Tilde{\xx})}{p_\theta(\xx,\Tilde{\xx})}\bigg] - \mathcal{D}_{KL}(p_d^{(\sigma)}(\Tilde{\xx}),p_\theta^{(\sigma)}(\Tilde{\xx}))\\
    =&\mathbb{E}_{p_d(\xx)p_\sigma(\Tilde{\xx}|\xx)}\bigg[\log \frac{p_d(\xx)p(\Tilde{\xx}|\xx)}{p_\theta(\xx)p(\Tilde{\xx}|\xx)}\bigg] - \mathcal{D}_{KL}(p_d^{(\sigma)}(\Tilde{\xx}),p_\theta^{(\sigma)}(\Tilde{\xx}))\\
    =&\mathbb{E}_{p_d(\xx)p_\sigma(\Tilde{\xx}|\xx)}\bigg[\log \frac{p_d(\xx)}{p_\theta(\xx)}\bigg] - \mathcal{D}_{KL}(p_d^{(\sigma)}(\Tilde{\xx}),p_\theta^{(\sigma)}(\Tilde{\xx}))\\
    =&\mathcal{D}_{KL}(p_d(\xx),p_\theta(\xx)) - \mathcal{D}_{KL}(p_d^{(\sigma)}(\Tilde{\xx}),p_\theta^{(\sigma)}(\Tilde{\xx})).
\end{align*}

\subsection{Proof of Proposition 2 (Section 3.3)}
When $t$ is small and by the first-order Taylor approximation, we can write 
\[
f_\theta^{(t)}(\xx) = f_\theta(\xx) + t \big[\frac{d}{\diff t}f_\theta^{(t)}(\xx)\big]|_{t=0} + {o}(t),
\] 
The corresponding DCD objective becomes
\begin{align*}
    &\mathcal{L}_{DCD}^{(VE)}(\theta) =
    \mathbb{E}_{\xx_0\sim p_d, \xx_t \sim p(\xx_t|\xx_0)}[f_\theta^{(t)}(\xx_t)] - \mathbb{E}_{\xx_0\sim p_0}[f_\theta(\xx_0)]\\
    &=\mathbb{E}_{p_t(\xx_t)}[f_\theta^{(t)}(\xx_t) - f_\theta(\xx_t)] + \mathbb{E}_{p_t}[f_\theta(\xx_t)] - \mathbb{E}_{p_d}[f_\theta(\xx_0)]\\
    &=\mathbb{E}_{p_t}t[ \frac{d}{\diff t}f_\theta^{(t)}(x_t)]|_{t=0} + \mathbb{E}_{p_t}[f_\theta(\xx_t)] - \mathbb{E}_{p_d}[f_\theta(\xx_0)]\\
    &=\mathbb{E}_{p_t}\frac{1}{2}\bm{G}^2(0)\big[ \|\nabla_{\xx} f_\theta(\xx)\|^2 + \Delta_{\xx} f_\theta(\xx) \big]+ \mathbb{E}_{p_t}[f_\theta(\xx_t)] - \mathbb{E}_{p_d}[f_\theta(\xx_0)].
\end{align*}

\subsection{Derivation of DCD-VE (Equation 12)}
$d\xx_t = \bm{G}(t)d \bm{w}_t$, the energy evolution is
\begin{equation}
\label{eqn:ve_energy}
    d f_\theta^{(t)}(\xx)/\diff t = \frac{1}{2}\bm{G}^2(t)\big[ \|\nabla_{\xx} f_\theta^{(t)}(\xx)\|^2 + \Delta_{\xx} f_\theta^{(t)}(\xx) \big].
\end{equation}

When $t$ is small and by the first-order Taylor approximation 
\[
f_\theta^{(t)}(x) = f_\theta(x) + t \big[\frac{d}{\diff t}f_\theta^{(t)}(x)\big]|_{t=0} + {o}(t),
\] the corresponding DCD objective becomes
\begin{align*}
    t\mathcal{L}_{DCD}^{(VE)}(\theta) =&
    \mathbb{E}_{\xx_0\sim p_d, \xx_t \sim p(\xx_t|\xx_0)}[f_\theta^{(t)}(\xx_t)] - \mathbb{E}_{\xx_0\sim p_0}[f_\theta(\xx_0)]\\
    &=\mathbb{E}_{p_t(\xx_t)}[f_\theta^{(t)}(\xx_t) - f_\theta(\xx_t)] + \mathbb{E}_{p_t}[f_\theta(\xx_t)] - \mathbb{E}_{p_d}[f_\theta(\xx_0)]\\
    &=\mathbb{E}_{p_t}t[ \frac{d}{\diff t}f_\theta^{(t)}(\xx_t)]|_{t=0} + \mathbb{E}_{p_t}[f_\theta(\xx_t)] - \mathbb{E}_{p_d}[f_\theta(\xx_0)]/
    t \\
    &=\mathbb{E}_{p_t}\frac{1}{2}\bm{G}^2(0)\big[ \|\nabla_x f_\theta(x)\|^2 + \Delta f_\theta(x) \big]\\
    &+ \mathbb{E}_{p_t}[f_\theta(\xx_t)] - \mathbb{E}_{p_d}[f_\theta(\xx_0)].
\end{align*}

\subsection{Backgrounds on Skilling-Hutchison trick}
Skilling-Hutchison's (SH) \citep{Hutchinson1989ASE} stochastic trace estimation trick is a commonly used solution for efficient computation of trace of the Jacobian matrix for high-dimensional problems. In our work, we adapt the SH trick to estimating the trace of Jacobian which appears in \eqref{eqn:dcd_ve}. More precisely, we aim to compute the trace of the Jacobian term
\begin{align}
    \Delta_{\xx} f_\theta(\xx) \coloneqq \nabla_{\xx} \bm{s}_\theta(\xx),
\end{align}
where $\bm{s}_\theta \coloneqq \nabla_{\xx} f_\theta(\xx)$ is the score function of the EBM. The SH estimation uses a stochastic quadratic form to estimate the trace term, i.e.
\begin{align}
    \nabla_{\xx} \bm{s}_\theta(\xx) = \mathbb{E}_{\epsilon\sim p_\epsilon}\epsilon^T \nabla_{\xx} \bm{s}_\theta(\xx) \epsilon = \mathbb{E}_{\epsilon\sim p_\epsilon}(\epsilon^T \nabla_{\xx} \bm{s}_\theta(\xx)) \epsilon.
\end{align}
The distribution $p_\epsilon$ is assumed to be isotropic, i.e. $\mathbb{E}_{\epsilon\sim p_\epsilon}{\epsilon \epsilon^T} = \mathbf{I}$. The multivariate Gaussian distribution is a usual choice for $p_\epsilon$. The vector-Jacobian-product term $\epsilon^T \nabla_{\xx} \bm{s}_\theta(\xx)$ is efficient to implement with deep learning computation framework such as PyTorch with $\mathcal{O}(1)$ memory costs. More precisely, for a data $\xx$, we first compute the score function $\bm{s}_\theta(\xx)$ of the EBM by automatic gradient computation functions of deep learning frameworks such as PyTorch. Then we randomly sample a Gaussian vector and compute the Jacobian-vector product of $\bm{v}^T \bm{s}_\theta(\xx)$. After that, we calculate the final quadratic form $\bm{v}^T \bm{s}_\theta(\xx)\bm{v} = (\bm{v}^T \bm{s}_\theta(\xx))\bm{v}$. However, Though the Skilling-Hutchison trace estimation trick can alleviate the non-linear memory cost problem, frankly speaking, the DCD consumes more GPU memory than MCMC-based methods. From this point of view, the DCD can be understood as a method that trades memory costs for computational efficiency when training EBMs.

\subsection{Algorithm for training time-dependent EBM with DCD-VE}
\begin{algorithm}[]
\SetAlgoLined
\KwIn{dataset $\mathcal{D}=\left\{x_{i}\right\}_{i=1}^{n}$, time-dependent EBM $f_\theta(x,t)$, diffusion process $(F,G)$, perturbation time $\delta$, end timestamp $T$, mini-batch size B.}

\While{not converge}{
Sample time step $t\sim Unif[0,T]$,\\
Sample mini-batch uniformly $\{x_i^{(0)}\}_{i=1}^B \sim \mathcal{D},i=1,..,B$,\\
Diffuse data sample with $x^{(t)}_i \sim p(x^{(t)}_i|x^{(0)}_i)$,\\
Calculate DCD objective $\mathcal{L}_{DCD}(\theta)$(\eqref{eqn:dcd_ve}) with data samples $\{x^{(t)}_i\}_{i=1}^B$,\\
Update $\theta$ with gradient decent according to minimize $\mathcal{L}_{DCD}(\theta)$.
}
\Return{$\theta$.}
\caption{Training time-Dependent EBM with DCD}
\label{alg:dcd}
\end{algorithm}
The available objective $\mathcal{L}_{DCD}(\theta)$ can be $\mathcal{L}_{DCD}^{VE}(\theta)$ or $\mathcal{L}_{DCD}^{VP}(\theta)$ as proposed in previous sections.

\section{More on experiments}
\subsection{Experiment Details on 2D Synthetic Modeling}
\paragraph{Datasets.} We train EBMs on seven 2D datasets: Swissroll, Circles, Rings, Moons, 8Gaussians, 2Spirals and Checkerboard. The code to generate the dataset is adapted from the open source codebase\footnote{https://github.com/wgrathwohl/LSD}. 

\paragraph{Model architecture}
We use the multi-layer perceptron (MLP) with 4 layers and 300 hidden units in each layer as the implementation of the energy-based model. We use the Gaussian Error Linear Units (GELU) \citep{Hendrycks2016GaussianEL} as the activation function. 

\paragraph{Hyper-parameters for DCD-VE.} We use the one-step DCD-VE (equation \eqref{eqn:dcd_ve}) for implementation. We use $t=0.0005$ and $G(0)^2 = 1$. We train all models (with different methods) with the same hyper-parameters: the optimizer is Adam optimizer with $\beta = (0.9, 0.99)$. The batch size is 1000, the learning rate is 0.001 and the number of training iterations is 5000. 
For ablation training methods, i.e. CD and PCD. For CD, we use 0.001 to be the step size of Langevin dynamics. The number of iterations of the Langevin dynamics is set to be 10. For PCD, we use a replay buffer with a size of 10000. The Langevin dynamic step size is set to be 0.001 and the number of MCMC steps is 20. We set the update frequency of the replay buffer to be 5\%, which follows the setting of \citep{du2019implicit}. 

\paragraph{Evaluation metric.} We compute the score-matching loss over the training data as the evaluation metric. The score matching loss is defined with 
\begin{align}
    \operatorname{L}(\theta) \coloneqq \mathbb{E}_{\xx \sim p_d} \bigg[ \frac{1}{2}\|\nabla_{\xx} f_\theta(\xx)\|_2^2 + \Delta_{\xx} f_\theta(\xx) \bigg].
\end{align}
So the smaller the SM loss is, the better the learning performance of the EBM. 

\subsection{Details on image denoising}
In this experiment, we train EBM with CD and DCD-VE on four image datasets for denoising: CIFAR10, SVHN, MNIST, and the FashionMNIST datasets. 

\paragraph{Model architecture.} We use the Wide ResNet \citep{zagoruyko2016wide} with the Sigmoid-weighted Linear Units (SiLU) \citep{Elfwing2017SigmoidWeightedLU} activations and no normalization as the implementation of the energy-based model. For MNIST and the FashionMNIST model, we set the depth to 16 and the widen factor to 8. For the CIFAR10 and SVHN datasets, we set the depth to 28 and the widen factor to 10. 

\paragraph{Training details.} 
We first pre-process the data to scale the range of an image to $[-1,1]$.
In order to let the EBM learn the denoising ability of data samples, we pre-process the training data by adding a Gaussian noise of amount $\sigma = 0.3$. We use the Adam optimizer \citep{Kingma2014AdamAM} with $\beta_0 = 0.9$ and $\beta_1 = 0.99$ and learning rate $0.0002$. For the DCD-VE training algorithm, we set the diffusion strength $t=0.018$ and $G(0)^2 = 1$. To make a fair comparison, we set the step size of the Langevin dynamics also to be $0.018$. For CD, we use one-step of Langevin dynamics for implementing CD. 

\paragraph{Evaluation metric.} To evaluate the denoising performance of trained EBM, we use the trained EBM to denoise noisy images which are added Gaussian noise with three levels: $\sigma=0.3$, $\sigma=0.6$ and $\sigma=0.9$. 

\subsection{Details on image generation}
We train time-dependent EBM with the EDM \citep{karras2022edm} forward diffusion which is a special instance of VE diffusion \eqref{eq:ve_forward}, for which the $g(t)=t$. 

Samples of $\xx_t$ are cheap to obtain by adding Gaussian noise to data samples $\xx_0\sim p_d$. We randomly choose a time $t\sim\operatorname{LogNormal}(t; -1.2, 1.2)$ following the same setting as the EDM model and draw samples with 
$$\xx_t = \xx_0 + \sigma(t) \epsilon, \epsilon\sim \mathcal{N}(\epsilon, \bm{0}, \bm{I}).$$
Here $\xx_0\sim p_d$ denotes a data sample and $\epsilon$ is a standard Gaussian vector of the same size as $\xx_0$. Then we slightly diffuse $\xx_t$ to $\xx_{t+\Delta t}$. This can be done by adding another Gaussian noise of variance $\sqrt{\sigma(t+1)^2-\sigma(t)^2}$ and calculate DCD with $\xx_{t+\Delta t}$ and $\xx_t$. 
\paragraph{Network architecture.} We adopt a UNet encoder from the VP architecture of EDM model \citep{karras2022edm}. We add an additional SiLU non-linearity to the layer before the last pooling layer. 
\paragraph{Sampling Method.} We adapt the Heun sampling algorithm from \citet{karras2022edm} for sampling from time-dependent EDM. We discretize the noise levels from 0.01 to 80.0 to 18 time-stamps with the same strategy of \citet{karras2022edm}.

\end{document}